\documentclass{article}

\PassOptionsToPackage{numbers,compress}{natbib}

 \usepackage[preprint]{neurips_2026}

\usepackage[utf8]{inputenc} %
\usepackage[T1]{fontenc}    %
\usepackage{hyperref}       %
\usepackage{url}            %
\usepackage{booktabs}       %
\usepackage{amsfonts}       %
\usepackage{nicefrac}       %
\usepackage{microtype}      %
\usepackage[dvipsnames]{xcolor}         %

\usepackage{booktabs}
\usepackage{algorithm}
\usepackage{algorithmic}
\usepackage{subcaption}
\usepackage{amssymb}
\usepackage{amsmath}
\usepackage{graphicx}
\usepackage{pifont}
\usepackage{wrapfig}
\usepackage{colortbl}
\usepackage{enumitem}
\usepackage{amsthm}
\usepackage{etoc}

\newtheorem{assumption}{Assumption}
\newtheorem{lemma}{Lemma}
\newtheorem{theorem}{Theorem}

\newcommand{\tablestyle}[2]{\def\arraystretch{#2}\setlength{\tabcolsep}{#1}}

\definecolor{baselinecolor}{gray}{.9}
\newcommand{\baseline}[1]{\cellcolor{baselinecolor}{#1}}

\title{ForgeVLA: Federated Vision-Language-Action Learning without Language Annotations}

\author{%
  Yuhao Zhou$^{1,5}$, Yunpeng Zhu$^{2}$, Yang Zhou$^{1,5}$, Jindi Lyu$^{1,5}$, Jian Lan$^{3}$, \\ 
  \textbf{Zhangyuan Wang$^{2}$, Dan Si$^{4}$, Thomas Seidl$^{3}$, Qing Ye$^{1,5}$, Jiancheng Lyu$^{1,5}$} \\
  Sichuan University$^{1}$, Zhejiang University$^{2}$, \\
  Ludwig-Maximilians-Universität München$^{3}$, Lenovo Group Limited$^{4}$, \\
  Engineering Research Center of Machine Learning and Industry Intelligence, Ministry of Education$^{5}$\\
  \texttt{sooptq@gmail.com}, \texttt{lvjiancheng@scu.edu.cn} \\
}

\begin{document}

\maketitle
\etocsettocdepth.toc{none}

\begin{abstract}
    Vision-Language-Action (VLA) models hold great promise for general-purpose robotic intelligence, yet scaling up such models is severely bottlenecked by the high cost of acquiring annotated training data.
Fortunately, vision-equipped robots deployed across various domains already produce abundant vision-action pairs that can be leveraged to scale up VLA training more efficiently.
However, these raw data cannot be centrally aggregated due to various constraints and also exhibit severe heterogeneity.
To address these challenges, in this paper, we propose ForgeVLA, a federated VLA training framework that learns VLA models from distributed vision-action pairs without centralizing raw data or requiring manual annotations.
Specifically, each client in ForgeVLA is equipped with an embodied instruction classifier that maps vision-action pairs to a predefined instruction set, recovering the missing language modality and forming complete vision-language-action triplets.
Beyond triplet construction, we also identify vision-language feature collapse as a critical challenge that has been largely overlooked in prior federated VLA research.
To mitigate this issue, ForgeVLA combines a client-side contrastive planning loss with a server-side adaptive aggregation strategy to learn task-discriminative representations efficiently.
Extensive experiments across multiple benchmarks show that ForgeVLA significantly outperforms other baselines, and ablation studies further validate the contribution of each component.

\end{abstract}

\section{Introduction}

Vision-Language-Action (VLA) models integrate visual perception, language understanding, and motor control into a single policy, advancing toward general-purpose robotic intelligence~\cite{brohan2023rt2, driess2023palme, black2024pi0, kim2024openvla}.
As with LLMs and VLMs~\cite{kaplan2020scaling, hoffmann2022chinchilla, muennighoff2023scaling}, VLA models benefit from larger and more heterogeneous training corpora, with stronger cross-embodiment transfer observed at greater data scale~\cite{walke2023bridgedata, khazatsky2024droid, openx2024, zheng2026egoscale}.
However, unlike text or image-text pairs that can be collected at near-zero marginal cost, VLA training samples typically require physical execution, making data collection a central bottleneck for scaling VLA capabilities~\cite{dasari2019robonet, mandlekar2023mimicgen}.

Prior work has explored multiple directions to alleviate data scarcity in VLA training:
data augmentation~\cite{laskin2020reinforcement, kostrikov2020image, mandlekar2023mimicgen} provides limited additional diversity;
simulation-based pipelines~\cite{tobin2017domain, james2020rlbench, mu2021maniskill} are constrained by the reality gap;
and generative models that synthesize visual trajectories~\cite{du2024learning, black2024zeroshot, ko2023learning} still struggle to produce physically plausible action labels.
These approaches all expand the training corpus \emph{synthetically}, raising a natural question: \emph{can we source high-quality VLA training data at scale and low cost without relying on synthetic generation}?

Historically, LLM and VLM scaling benefited from repurposing existing artifacts at low cost~\cite{villalobos2022will} (\textit{e.g.}, decades of written text~\cite{brown2020language,touvron2023llama}).
A similar opportunity may exist in robotics~\cite{kehoe2015survey} as well: millions of vision-equipped robots deployed in manufacturing~\cite{ifr2025}, warehousing~\cite{amazon2024robots}, healthcare~\cite{intuitive2024}, autonomous driving~\cite{sun2020scalability}, and other domains~\cite{duckett2018agricultural, abolhasani2023rise, wynn2014autonomous} could already log synchronized visual observations and action trajectories suitable for VLA training.
The missing modality is natural-language task description, which is typically unnecessary at deployment time and difficult to recover post hoc because a continuous action stream admits multiple valid descriptions at different granularities.
Thus, the bottleneck for scaling VLA may largely reduce to the absence of language annotations for otherwise valuable vision–action logs.

Closing this annotation gap at scale faces two tightly coupled challenges:
(1) \emph{Privacy}: robotic data is siloed within independent organizations (\textit{e.g.}, factories and hospitals) under strict confidentiality constraints.
(2) \emph{Heterogeneity}: clients differ in embodiment, task distributions, sensing conditions, and environments, inducing severely non-independent and identically distributed (non-i.i.d.) data.

Federated learning (FL)~\cite{mcmahan2017fedavg, kairouz2021advances} provides a natural framework for this setting, allowing clients to exchange only model updates.
However, existing federated VLA methods leave two issues unresolved:
(1) \emph{Data}: FedVLA~\cite{cui2025fedvla} and FLAME~\cite{bou2025flame} assume fully annotated VLA triplets at each client, offering no mechanism to incorporate the far more abundant unannotated vision–action logs.
(2) \emph{Heterogeneity}: we empirically find that standard non-i.i.d. mitigation strategies (\textit{e.g.}, FedProx~\cite{li2020fedprox}) do not transfer to VLA training and can even degrade performance (Table~\ref{tab:fedprox-exp}), suggesting a failure mode beyond conventional client drift.

To address these challenges, we propose \textbf{ForgeVLA}, a federated framework that repurposes legacy vision–action logs for VLA training without centralizing raw data or manually annotating task instructions.
On the data side, ForgeVLA equips each client with an \emph{embodied instruction classifier}, a pretrained VLM fine-tuned on a small public VLA dataset, that maps local vision-action pairs to a predefined instruction set entirely on-device, recovering the missing language modality.
On the heterogeneity side, we trace the root cause to vision-language feature collapse and introduce (i) a contrastive planning loss that promotes task-discriminative representations during local training, and (ii) an adaptive aggregation strategy that preserves client update directions on the server.
In summary, our main contributions are:

\vspace{-5pt}
\begin{enumerate}[leftmargin=1.2em]
\item We identify the installed base of vision-equipped robots as a largely untapped opportunity to scale VLA training beyond curated or synthetic datasets.
Because privacy constraints preclude centralized access, we propose ForgeVLA, a federated paradigm that recovers the missing language modality on-device via classification over a predefined instruction set and converts distributed legacy logs into usable VLA training data.
\item We show that prior methods fail to address heterogeneity in federated VLA training, and we trace the root cause to vision-language feature collapse under cross-domain distributional shifts. 
Motivated by this finding, we mitigate collapse with a contrastive planning loss for task-invariant local representation learning and a adaptive aggregation strategy on the server.
\item Extensive experiments show that ForgeVLA significantly outperforms federated baselines across multiple benchmarks. 
Source code will be released after publication.
\end{enumerate}

\section{Related Work}

\textbf{Vision-Language-Action Models}:
Vision-Language-Action (VLA) models provide an end-to-end paradigm for generalist robot control by integrating visual perception, language understanding, and action prediction within a single architecture.
RT-2~\citep{brohan2023rt2} pioneered this direction, and the open-sourced OpenVLA line~\citep{kim2024openvla,kim2025openoft} further broadened its adoption while improving task performance and inference efficiency.
Subsequent work has advanced VLA models along several axes, with many reporting state-of-the-art results on standard robotic manipulation benchmarks (\textit{e.g.}, dexterous manipulation, 3D spatial reasoning, action-prediction module design, and training frameworks~\citep{black2024pi0,li2024cogact,qu2025spatialvla,chen2025internvla,wen2025dexvla}).
In parallel, multi-embodiment learning aims to train policies that generalize across heterogeneous robot platforms without manual action-space alignment~\citep{ghosh2024octo,doshi2024crossformer,wang2024hpt}.
Despite these advances, current VLA methods still rely on large-scale, high-quality annotated data, whose manual labeling cost and limited scalability remain a central bottleneck for robot learning.

\textbf{Federated Vision-Language-Action Learning}:
Federated learning (FL)~\citep{mcmahan2017fedavg, zhou2021communication} enables collaborative training across distributed clients without sharing raw data.
Specifically, each client in FL optimizes locally and transmits only model updates to a central server for global aggregation~\cite{zhou2023communication, zhou20253sfc}.
A central challenge in FL is client heterogeneity.
When client data are non-i.i.d., local updates can drift from the global optimum and slow or destabilize convergence.
Accordingly, a range of methods~\citep{li2020fedprox, karimireddy2020scaffold, wang2020fednova, acar2021feddyn, shi2023prior, zhou2024federated, zhou2025deploying} mitigate non-i.i.d. effects via mechanisms such as proximal regularization, control variates, and client clustering.
Privacy-enhancing techniques, including secure aggregation~\citep{bonawitz2017practical} and differential privacy~\citep{abadi2016deep}, can be layered on top of standard FL to provide formal privacy guarantees, though prior gradient-leakage analyses~\citep{zhu2019deep, geiping2020inverting} show that unprotected model updates can reveal training data.

Recently, federated VLA has gained attention because robotic data are generated in a highly distributed manner and are often difficult to centralize due to privacy regulations and communication constraints.
In this line, FLAME~\citep{bou2025flame} introduces an FL benchmark for robotic manipulation with over 160K demonstrations and evaluation protocols.
FedVLA~\citep{cui2025fedvla} proposes an FL framework for VLA, incorporating instruction-oriented scene parsing, a dual-gating mixture-of-experts architecture with token–expert joint routing, and expert-aware server aggregation.
Unlike prior work that assumes fully annotated VLA triplets at each client, ForgeVLA targets a more realistic and challenging setting in which clients only store unannotated vision–action logs.
Notably, ForgeVLA recovers the missing language modality on-device via classification over a predefined instruction set and addresses the vision-language feature collapse that arises under cross-domain federated VLA training.

\section{Problem Formulation}

We consider a standard FL setup consisting of one central server and $N$ participating clients, indexed by $i \in [N] \triangleq \{1, 2, \dots, N\}$. 
Each client $i$ holds a private local dataset $D_i$ containing $K_i$ time-synchronized vision--action pairs, collected from its embodied robotic system:
\begin{equation}
D_i = \left\{ (v_i^k, a_i^k) \right\}_{k=1}^{K_i},
\end{equation}
where $v_i^k \in \mathcal{V}$ denotes a visual observation sampled from the vision space $\mathcal{V}$, and $a_i^k \in \mathcal{A}$ denotes a corresponding robot action sampled from the action space $\mathcal{A}$. 
A defining characteristic of our problem is that the local dataset $D_i$ lacks language instructions $l_i^k \in \mathcal{L}$ due to unnecessity and ambiguity during deployments, where $\mathcal{L}$ is the language space and there are $M$ distinct instructions in total.

The global VLA model follows an encoder-decoder architecture, parameterized by $\theta = (\theta_{\text{enc}}, \theta_{\text{dec}}) \in \Theta_{\text{vla}}$. 
The vision-language encoder $f_{\theta_{\text{enc}}}: \mathcal{V} \times \mathcal{L} \rightarrow \mathcal{Z}$ maps a visual observation and a natural language instruction to a joint latent representation $z \in \mathcal{Z}$, where $\mathcal{Z}$ is the latent embedding space. 
The action decoder $f_{\theta_{\text{dec}}}: \mathcal{Z} \rightarrow \mathcal{A}$ takes the joint latent representation $z$ and predicts the corresponding robot action. 
The overall VLA mapping is thus the composition: $f_\theta(v, l) = f_{\theta_{\text{dec}}}(f_{\theta_{\text{enc}}}(v, l))$.
To bridge the missing language modality locally, each client $i$ owns a pretrained instruction classifier $c_{\phi}: \mathcal{V} \times \mathcal{A} \rightarrow \mathcal{L}$, parameterized by $\phi \in \Phi$. 
This classifier generates a plausible language instruction $\hat{l}_i^k = c_{\phi}(v_i^k, a_i^k)$ from a local vision--action pair.
Let $\ell(\cdot, \cdot)$ denote a task-specific loss function, the local empirical loss on client $i$ can be defined as:
\begin{equation}
\mathcal{L}_i(\theta; D_i, \phi) = \frac{1}{K_i} \sum_{k=1}^{K_i} \ell\left( f_{\theta_{\text{dec}}}\left(f_{\theta_{\text{enc}}}\left(v_i^k, c_{\phi}(v_i^k, a_i^k)\right)\right), a_i^k \right).
\label{eq:vla-loss}
\end{equation}
Then, the objective of federated VLA learning is to find the optimal global VLA parameters $\theta = (\theta_{\text{enc}}, \theta_{\text{dec}})$ that minimize the weighted sum of all local empirical losses, while adhering to the aforementioned privacy and heterogeneity constraints:
\begin{equation}
\arg\min_{\theta \in \Theta_{\text{vla}}} \sum_{i=1}^N w_i \cdot \mathcal{L}_i(\theta; D_i, \phi),
\end{equation}
where $w_i \geq 0$ are client-specific weights satisfying $\sum_{i=1}^N w_i = 1$ that helps combine clients' local losses into a global optimization target.
The most widely adopted $w_i$ is proportional to the local dataset size, \textit{i.e.}, $w_i = K_i / \sum_{j=1}^N K_j$.

\begin{figure*}[tb]
    \centering
    \includegraphics[width=\textwidth]{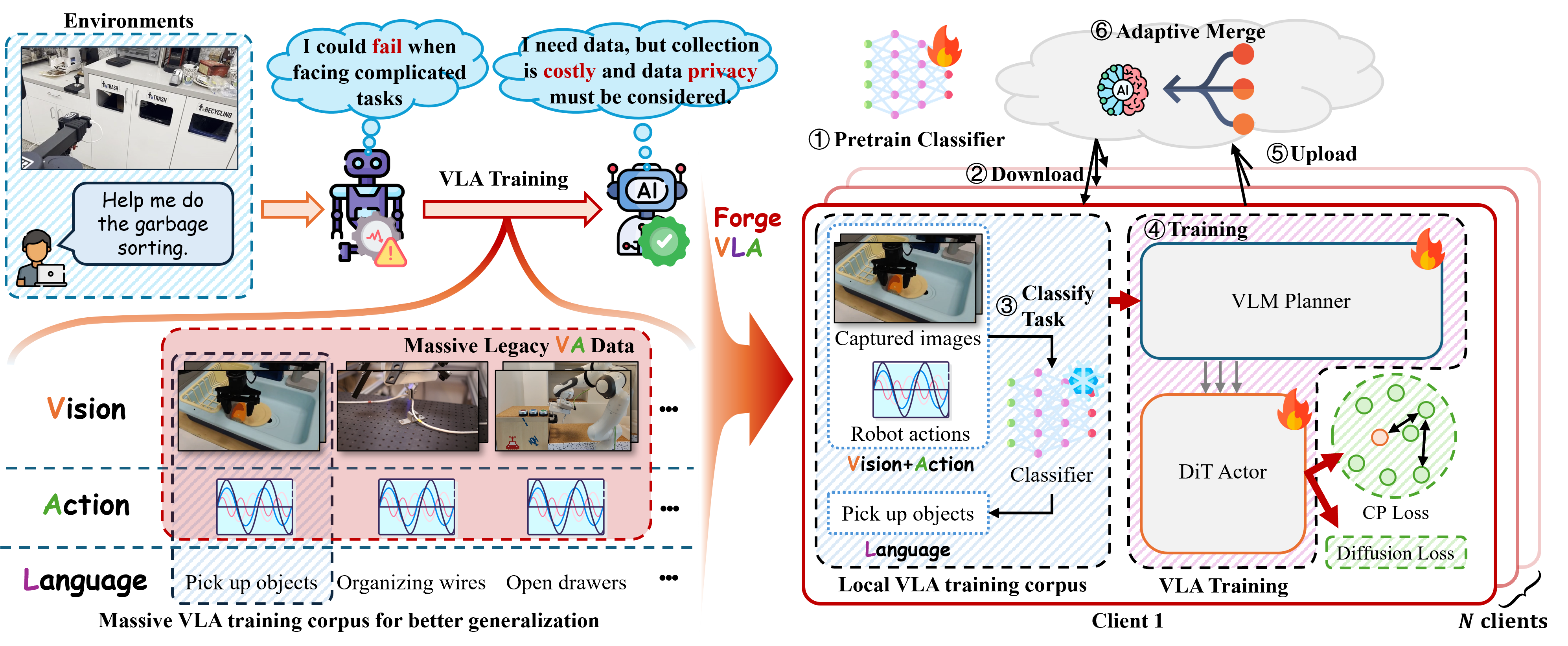}
    \caption{
    \textbf{[Left]}: The key bottleneck for scalable VLA training is data scarcity, as high-quality annotated VLA data are limited, while large volumes of vision–action logs remain underutilized.
    \textbf{[Right]}: ForgeVLA across $N$ clients: \ding{172} Train an embodied instruction classifier on the central server; \ding{173} Clients download the pretrained classifier and the initialized global VLA model; \ding{174} Perform on-device task classification to generate language annotations and construct a complete VLA training corpus; \ding{175} Conduct local VLA training with the task loss and the contrastive planning loss; \ding{176} Upload model updates to the server; \ding{177} The server performs adaptive aggregation to update the global VLA model. 
    This framework enables scalable VLA training from real-world robot logs at near-zero marginal cost while preserving privacy and mitigating data heterogeneity.
    }
    \label{fig:main-arch}
\vspace{-10pt}
\end{figure*}

\section{ForgeVLA}

Figure~\ref{fig:main-arch} overviews the ForgeVLA architecture.
On the server, we fine-tune a pretrained VLM on a small public VLA dataset to obtain an embodied instruction classifier $c_{\phi}$. 
The server then broadcasts $c_{\phi}$ together with the initialized global VLA model $f_\theta$ to all clients.
\begin{wrapfigure}{r}[0pt]{0.4\linewidth}
    \centering
    \vspace{-10pt}
    \includegraphics[width=\linewidth]{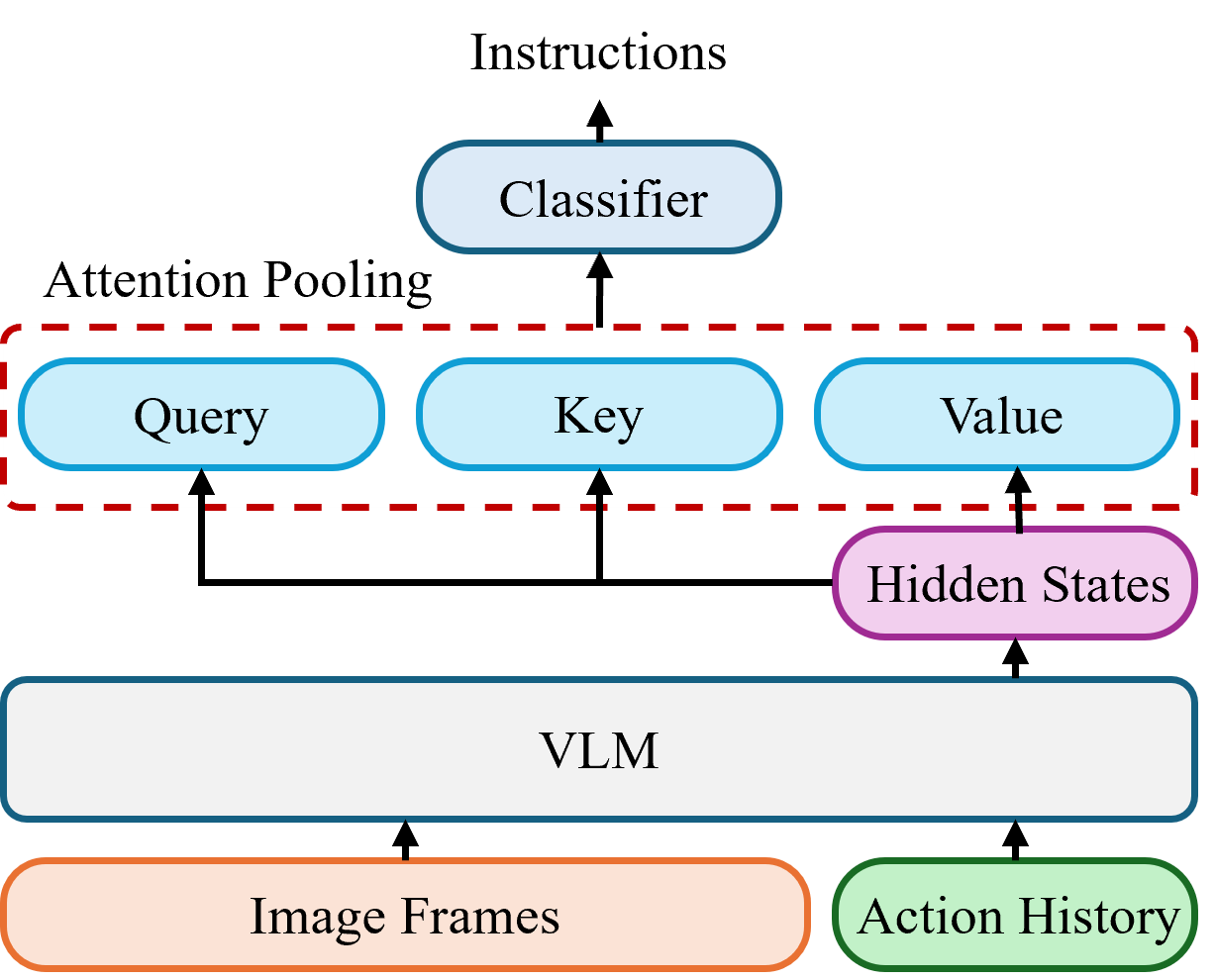}
    \caption{The Instruction Classifier.}
    \label{fig:classifier-arch}
    \vspace{-10pt}
\end{wrapfigure}
On each client, $c_{\phi}$ classifies local vision–action logs into the predefined instruction set, forming a VLA training corpus while keeping all raw data on-device to preserve privacy. 
The client then performs local VLA training with a contrastive planning loss to counteract heterogeneity-induced degradation.
After local training, clients upload only model updates, and the server applies adaptive aggregation to reduce cross-client update conflicts.
Overall, ForgeVLA addresses three bottlenecks for scaling VLA: leveraging underutilized real-world vision–action data, avoiding centralized collection of sensitive logs, and mitigating client heterogeneity.

\subsection{Embodied Instruction Classifier}

To recover the missing task description in raw vision–action logs and construct a valid local VLA training corpus, we propose an embodied instruction classifier built on a pretrained VLM.
The classifier operates over a predefined set of $M$ task instructions, which is a practical design choice that covers the structured deployment scenarios common in industrial and service robotics.
Figure~\ref{fig:classifier-arch} summarizes the design: 
we augment the VLM backbone with a lightweight attention-pooling head and a instruction classification layer.
Given a vision–action pair, the pretrained VLM produces final-layer hidden states. 
The attention-pooling module computes weights over these states and aggregates them into a context vector via weighted summation. 
The classifier then maps the context vector to logits over the instruction set. 
Leveraging the pretrained VLM’s strong generalization, this classifier requires only lightweight fine-tuning on a small public VLA dataset (as shown in Section~\ref{sec:scaling-analysis-classifier}), enabling efficient and robust on-device annotation.

\begin{figure}
    \centering
    \begin{subfigure}[tb]{0.325\linewidth}
        \centering
        \includegraphics[width=\linewidth]{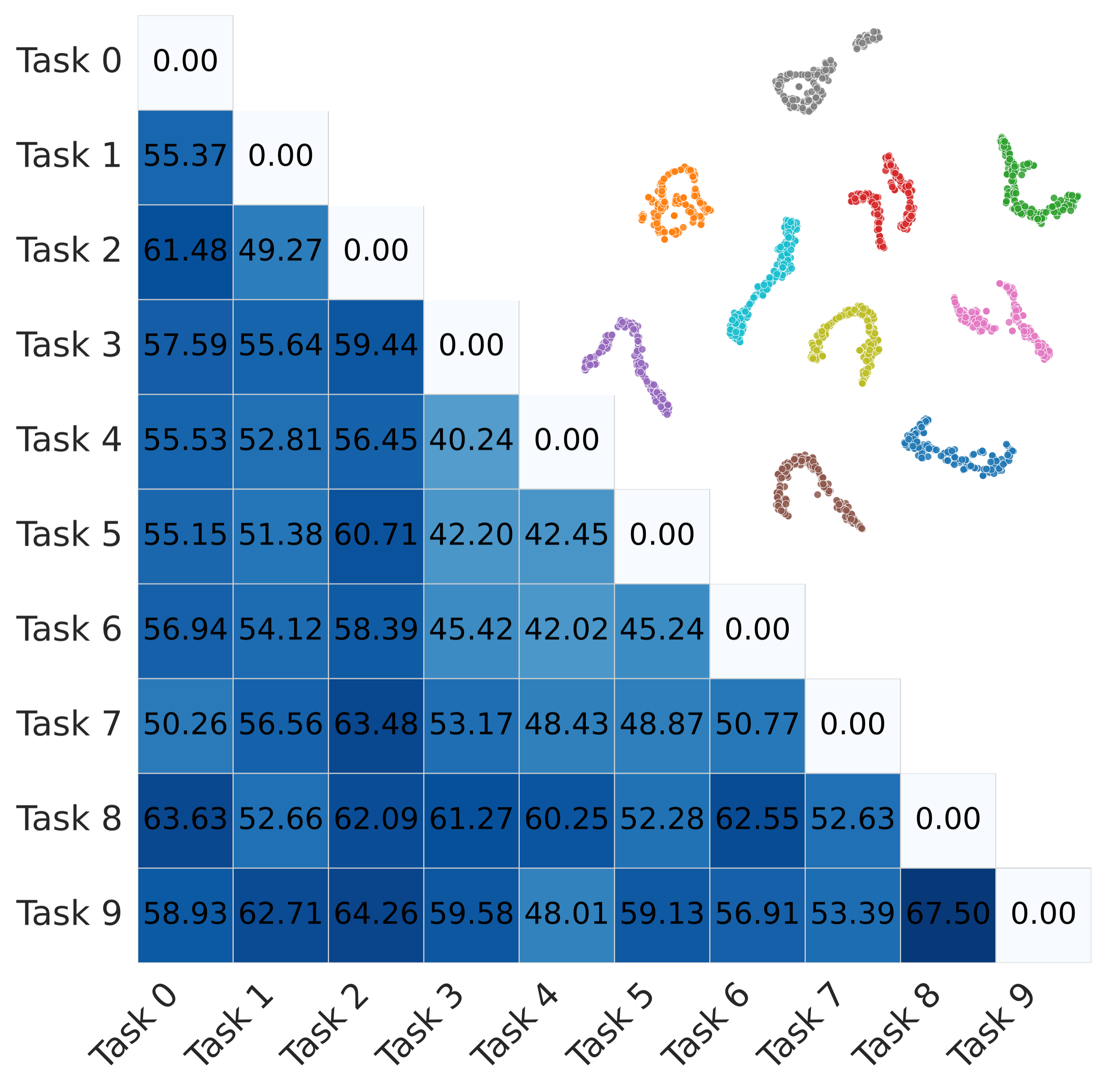}
        \caption{Centralized}
    \end{subfigure}
    \hfill
    \begin{subfigure}[tb]{0.325\linewidth}
        \centering
        \includegraphics[width=\linewidth]{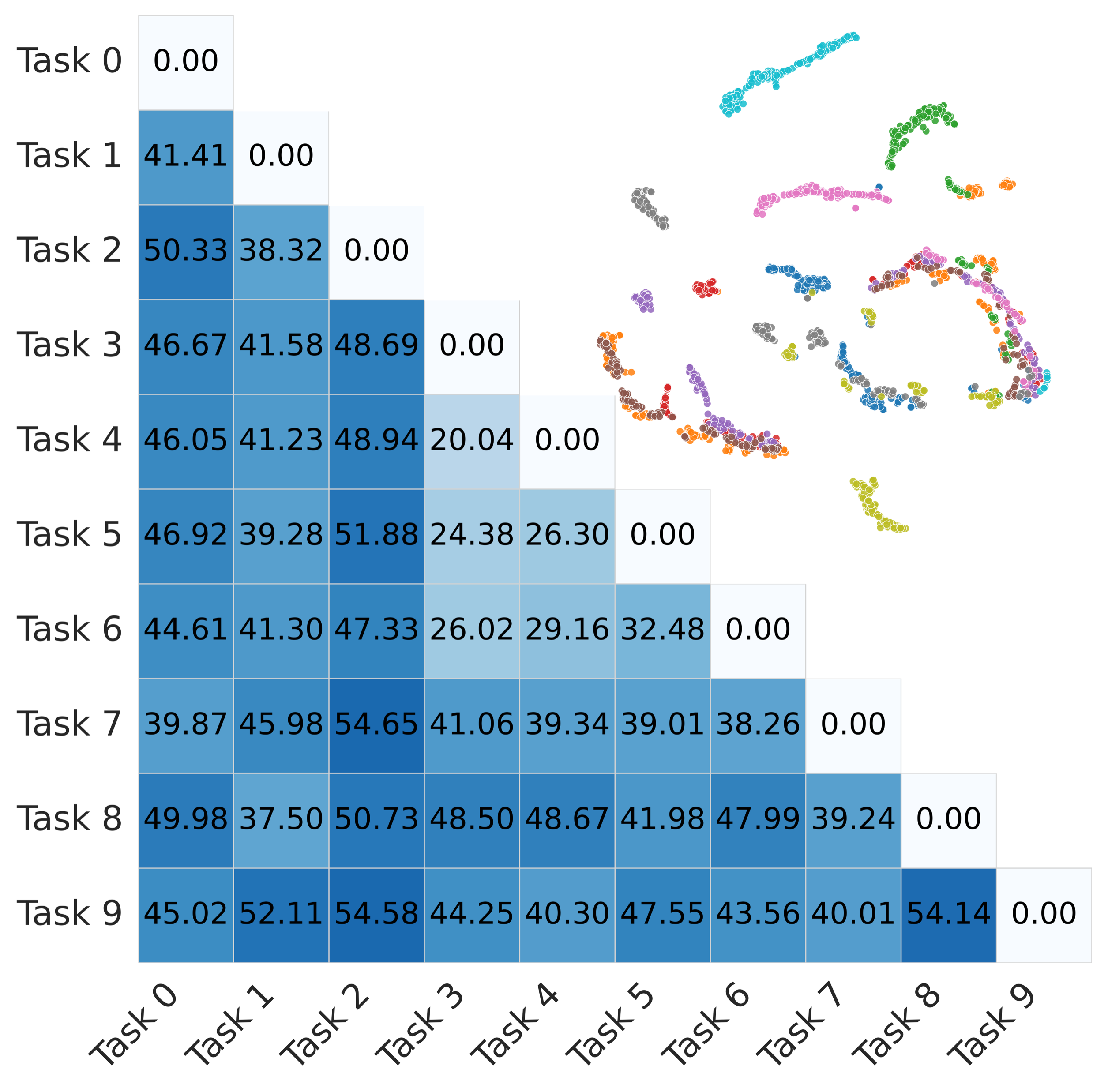}
        \caption{FedAvg}
    \end{subfigure}
    \hfill
    \begin{subfigure}[tb]{0.325\linewidth}
        \centering
        \includegraphics[width=\linewidth]{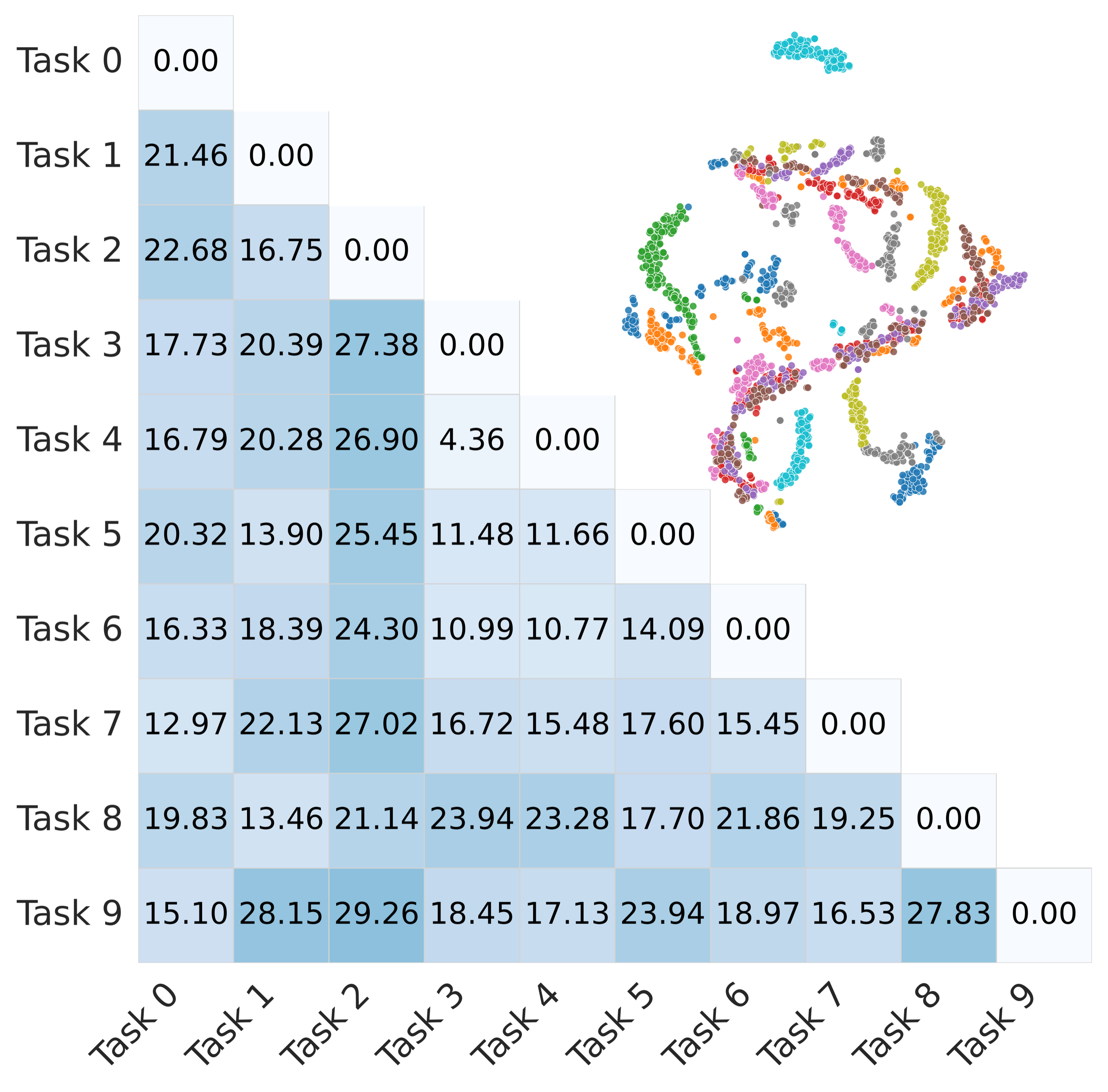}
        \caption{FedProx}
    \end{subfigure}
    \caption{The illustrated distances and T-SNE~\cite{van2008visualizing} projections of latent feature representations learned by different models across different tasks.}
    \label{fig:vl-feature-collapse}
\vspace{-10pt}
\end{figure}

\subsection{Vision-Language Feature Collapse}

Real-world robotic deployments exhibit pronounced cross-client heterogeneity. 
For example, manufacturing robots operating in different factories often share only a small subset of overlapping tasks, yielding highly skewed and largely disjoint task distributions. 
This issue is further amplified in FL because privacy constraints preclude centralizing raw data to smooth such heterogeneity.

Although numerous FL methods target non-i.i.d. data, we observe that directly applying them to federated VLA can severely degrade performance and even destabilize training, even when each client performs only a single local optimization step (\textit{i.e.}, classical client drift is largely eliminated). 
In preliminary experiments (Table~\ref{tab:fedprox-exp}), FedProx~\citep{li2020fedprox} performs substantially worse than vanilla FedAvg~\citep{mcmahan2017fedavg} and sometimes fails to converge, indicating that existing non-i.i.d. remedies are insufficient for federated VLA.

To diagnose the cause, we conduct a systematic analysis on Libero-Goal~\cite{liu2023libero}.
Specifically, we compare three models:
(i) a centralized upper-bound model trained on aggregated data directly, (ii) FedAvg with each client holding data from three tasks, and (iii) FedProx under the same client partition.
We then extract latent feature representations and visualize their pairwise distances across tasks.
As shown in Figure~\ref{fig:vl-feature-collapse}, relative to centralized training, both FedAvg and FedProx yield tightly clustered task embeddings with small inter-task margins, indicating reduced ability to discriminate among distinct manipulation goals.
FedProx exacerbates this effect, consistent with FedProx's proximal term pulling local models toward a global model whose representations are already collapsed.
These results reveal an underexplored failure mode in federated VLA: \emph{vision–language feature collapse}, where heterogeneous federated training causes task-specific vision-language representations to lose discriminability.
Notably, this mode is distinct from the conventional client drift studied in prior FL work, because it persists even with a single local step ($P=1$, \textit{i.e.}, no client drift).
We note that while we visualize collapse on Libero-Goal, the consistently large improvements of ForgeVLA over FedAvg on all four LIBERO benchmarks (Table~\ref{tab:main-results}) suggest the phenomenon is general across task distributions.
To mitigate this collapse, we propose a dual-strategy framework with complementary client-side and server-side components:

\textbf{Contrastive Planning Loss}:
A direct way to mitigate vision–language feature collapse is to enforce discriminative margins between task representations in the training objective.
To this end, we introduce a contrastive planning loss built on a global task representation bank $\{ u_1, u_2, \dots, u_M \}$ maintained by the central server.
At the start of communication round $e$, the server broadcasts the current normalized bank $U^e = \{ \hat{u}_1^e, \hat{u}_2^e, \dots, \hat{u}_M^e \}$ to all clients, providing consistent anchors for local contrastive learning.
For each local sample $(v_i^k, a_i^k)$, let $\hat{l}_i^k = l_{t_i^k}$ denote the instruction predicted by $c_\phi$, let $z_i^k = f_{\theta_{\text{enc}}}(v_i^k, \hat{l}_i^k)$, and let $\hat{z}_i^k = z_i^k / \|z_i^k\|_2$.
During local training, client $i$ computes the contrastive planning loss as follows:
\begin{equation}
\mathcal{L}_{\text{CP},i}^e
=
- \frac{\alpha_{\text{CP}}}{K_i}
\sum_{k=1}^{K_i}
\left(
\frac{(\hat{z}_i^k)^\top \hat{u}_{t_i^k}^e}{\tau}
-
\log \sum_{m=1}^M \exp\left( \frac{(\hat{z}_i^k)^\top \hat{u}_m^e}{\tau} \right)
\right),
\label{eq:cp-loss}
\end{equation}
where $t_i^k \in [M]$ is the instruction index associated with $\hat{l}_i^k$, $\tau$ is a fixed temperature hyperparameter, and each $\hat{u}_m^e$ is $\ell_2$-normalized.
Finally, $\mathcal{L}_{\text{CP},i}^e$ is added to the local VLA training objective.
After local training, each client computes the updated mean latent representation for each of its local tasks and uploads these task embedding to the server. 
The server then updates the global task representation bank via weighted aggregation, renormalizes each bank entry, and broadcasts the refreshed bank in the next communication round. 
This closed loop maintains discriminative task representations throughout federated training.

\begin{wrapfigure}{r}[0pt]{0.45\linewidth}
    \centering
    \vspace{0pt}
    \begin{minipage}[tb]{0.49\linewidth}
        \begin{subfigure}[tb]{\linewidth}
            \centering
            \includegraphics[width=\linewidth]{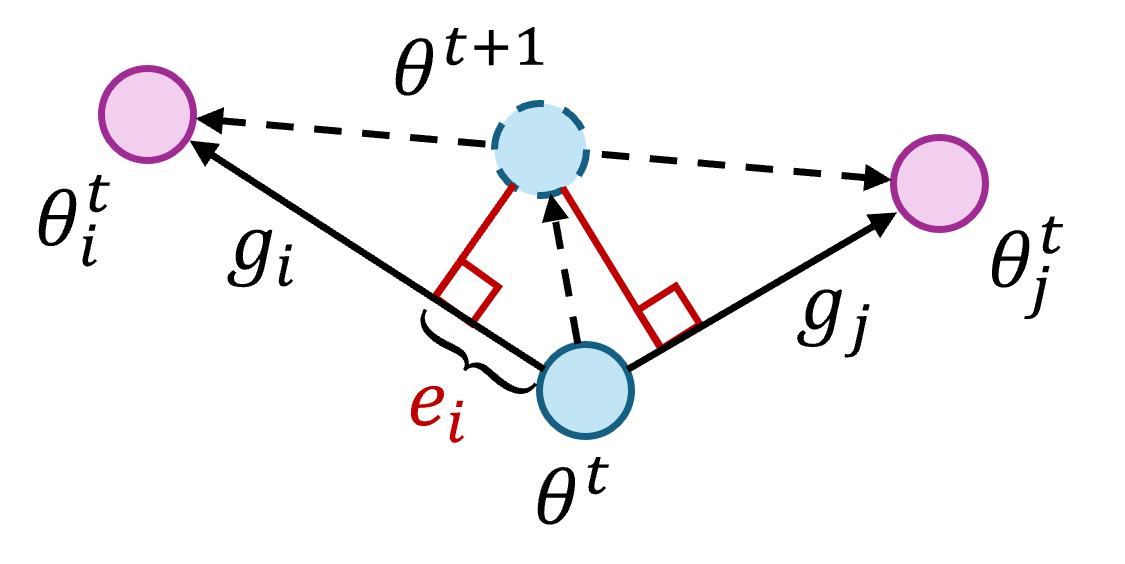}
            \caption{Simple Averaging}
        \end{subfigure}
        
        \begin{subfigure}[tb]{\linewidth}
            \centering
            \includegraphics[width=\linewidth]{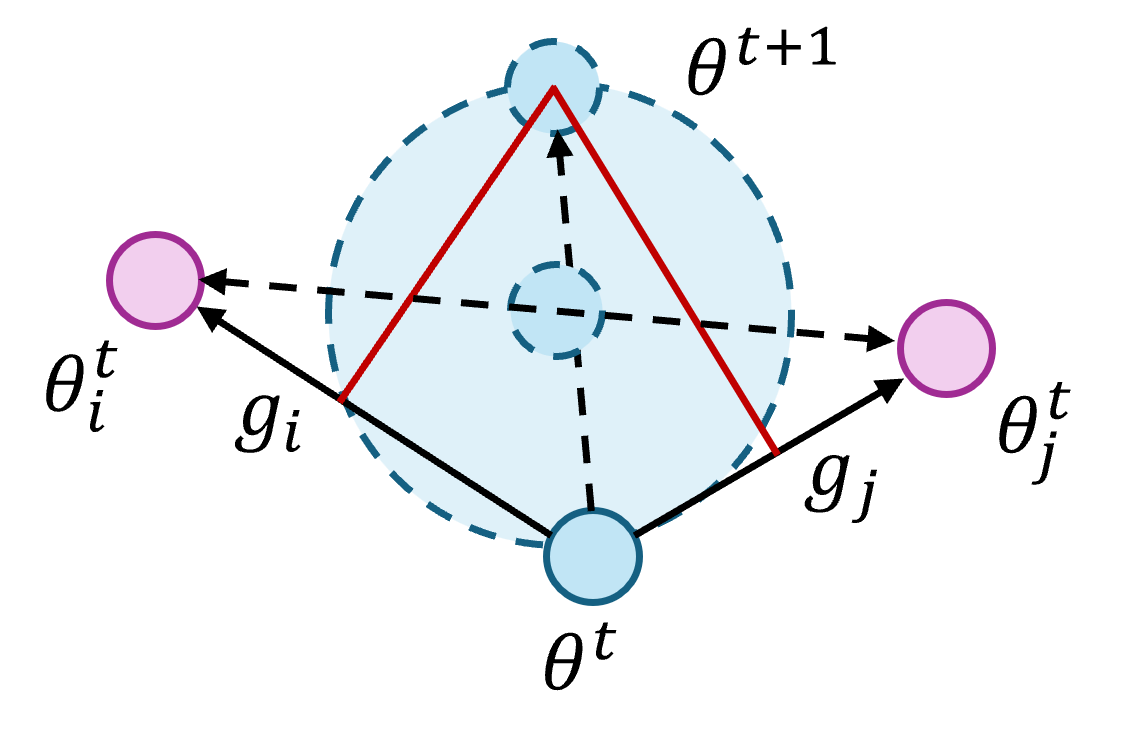}
            \caption{Constrained Adaptive Aggregation}
        \end{subfigure}
    \end{minipage}
    \hfill
    \begin{subfigure}[tb]{0.49\linewidth}
        \centering
        \vspace{16pt}
        \includegraphics[width=\linewidth]{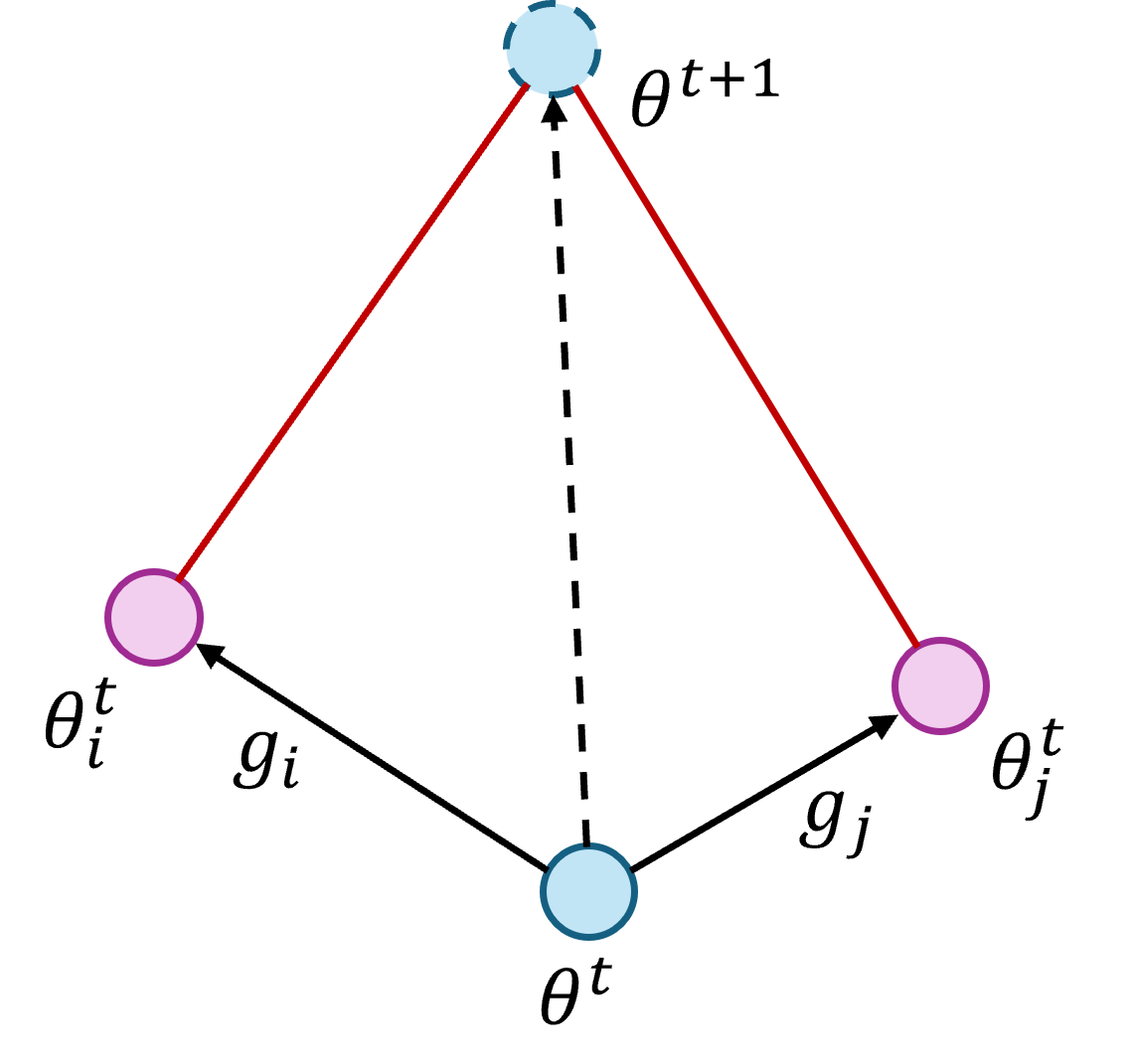}
        \vspace{8pt}
        \caption{Unconstrained Adaptive Aggregation}
    \end{subfigure}
    \caption{Illustrations of Aggregations.}
    \label{fig:adaptive-aggregation}
    \vspace{-15pt}
\end{wrapfigure}

\textbf{Adaptive Aggregation Strategy}:
As shown in Figure~\ref{fig:adaptive-aggregation} (a), simple averaging of heterogeneous client updates can cancel conflicting directions, producing a global update whose projection onto each client’s update direction is substantially attenuated. 
This cancellation slows convergence and degrades performance.
Based on this observation, we propose a server-side adaptive aggregation strategy that solves a data-free optimization problem using only client updates.
At communication round $e$, let $S^e \subseteq [N]$ denote the clients that upload local models, let $\theta_i^{e,P}$ denote client $i$'s model after $P$ local optimization iterations, let $g_i^e = \theta_i^{e,P} - \theta^e$ denote the uploaded update, and let $\tilde{w}_i^e$ denote the server-side weight assigned to an uploaded update.
We encourage the global update $\theta^{e+1} - \theta^e$ to match each uploaded client update $g_i^e$ in projection by minimizing the deviation of the projection coefficient from 1, \textit{i.e.},
\begin{equation}
    \arg\min_{\theta^{e+1}} \sum_{i \in S^e} \tilde{w}_i^e \left\| \frac{(\theta^{e+1} - \theta^e)^\top g_i^e}{\|g_i^e\|_2^2 + \varepsilon_{\text{AG}}}  - 1 \right\|_2^2,
\end{equation}
where $\varepsilon_{\text{AG}} > 0$ is a small numerical stabilizer when a client update is nearly zero.
While this objective promotes per-client update preservation, it can become unstable when client updates are highly conflicting (Figure~\ref{fig:adaptive-aggregation} (c)).
For example, if two clients’ updates are nearly anti-parallel, the unconstrained objective may drive $\|\theta^{e+1} - \theta^e\|_2$ to grow without bound.
To stabilize training, we regularize the solution by anchoring $\theta^{e+1}$ to the sampled weighted average of the uploaded client models, yielding the final objective (Figure~\ref{fig:adaptive-aggregation} (b)):
\begin{equation}
    \arg\min_{\theta^{e+1}} \sum_{i \in S^e} \tilde{w}_i^e \left\| \frac{(\theta^{e+1} - \theta^e)^\top g_i^e}{\|g_i^e\|_2^2 + \varepsilon_{\text{AG}}} - 1 \right\|_2^2 + \alpha_{\text{AG}} \left\| \theta^{e+1} - \left( \theta^e + \sum_{i \in S^e} \tilde{w}_i^e g_i^e \right) \right\|_2^2,
\label{eq:adaptive-aggregation}
\end{equation}
where $\alpha_{\text{AG}}$ balances projection alignment and update regularization.
Note that in the convergence analysis presented in the appendix, these server-side weights are defined as unbiased importance weights under partial client participation.

\section{Algorithm, Complexity, and Privacy}

Due to space limitations, we present key results here, with complete content deferred to the Appendix.

\textbf{Algorithm}: Full algorithm details are provided in the Algorithm~\ref{alg:forgevla-algorithm} in the Appendix.

\textbf{Complexity Analysis}: ForgeVLA matches vanilla FedAvg's asymptotic complexity: $\mathcal{O}(NEP)$ time complexity and $\mathcal{O}(N)$ space complexity, where $N$ is the number of clients, $E$ is the number of communication rounds, and $P$ is the number of local optimization iterations per round.
Please refer to the Appendix for full analysis.

\textbf{Privacy}: ForgeVLA provides standard FL privacy guarantees under the honest-but-curious server model. 
Existing privacy-enhancing paradigms originally designed for general FL are also fully compatible with ForgeVLA.
Please refer to the Appendix for full analysis.

\section{Experiment}

\textbf{Datasets and Models}:
We conduct experiments on four LIBERO benchmarks (LIBERO-Goal, LIBERO-Object, LIBERO-Spatial, LIBERO-10)~\cite{liu2023libero}, which are widely used VLA evaluation suites~\cite{kim2024openvla, qu2025spatialvla, reuss2024mdt, chen2025internvla}. 
For training data, we simulate non-i.i.d. heterogeneity via task-level partitioning, assigning $s$ tasks per client following community practices~\cite{li2020fedprox, yi2024federated, li2024global}.
All local client data retain only vision--action pairs, with a fraction $p$ of the full annotated dataset reserved to fine-tune the embodied instruction classifier, ensuring heterogeneous client data distributions. 
We use the InternVLA-M1 as our backbone~\cite{chen2025internvla}, which follows the encoder-decoder VLA architecture.
For training, we apply LoRA~\cite{hu2022lora} (rank $r$) to the VLM encoder that holds most parameters, and full fine-tuning to the action decoder.

\textbf{Baselines}:
We compare ForgeVLA against a set of representative baselines, including Centralized, FedAvg~\citep{mcmahan2017fedavg}, FedProx~\citep{li2020fedprox}, FedVLA$^*$\footnote{Since official code for FedVLA~\cite{cui2025fedvla} is unavailable, our FedVLA baseline uses a community implementation. Link is omitted for anonymity.}, and a FedVLA$^*$ + CLIP~\citep{radford2021learning} variant.
Specifically, Centralized is a centralized oracle upper bound trained on full aggregated data, and FedVLA$^*$ + CLIP is enhanced with pre-trained CLIP features for better vision-language alignment.
We note that FedVLA$^*$ uses a substantially smaller backbone (79M parameters vs.\ 3.9B for InternVLA-M1), which partly explains its lower absolute performance.

\textbf{Metrics}:
We report four key evaluation metrics: (1) the number of trainable parameters, (2) the total number of model parameters, (3) task success rate, and (4) $\text{Pass}@K$, defined as the probability that a task succeeds at least once within $K$ attempts.

\textbf{Implementation Details}:
We run all experiments with $N=10$ clients with full participation ratio of 1.0 and $s=3$ tasks per client.
Moreover, we use $E=20$ communication rounds and set $P=5$ local optimization iterations per round (implemented as local epochs) to avoid overfitting, with a per-client local batch size of 32.
Additionally, a constant learning rate schedule with warmup is adopted for the VLM encoder ($10^{-5}$) and the action decoder ($10^{-4}$). 
Training uses the AdamW optimizer with zero weight decay, with hyperparameters $\alpha_{\text{CP}}=0.2$, $\alpha_{\text{AG}}=0.1$, and LoRA rank $r=32$ for the VLM encoder.
All experiments are conducted using Nvidia H800 GPUs with CUDA 12.8, Python 3.10.19, and PyTorch 2.6.0. 
For other baselines, we follow the default hyperparameters from their original works or corresponding implementations.
For more details on hyperparameters and training settings, please refer to the Appendix.
Unless otherwise specified, all experiments use the mentioned settings.

\section{Analysis}

\begin{table}[t]
  \def\arraystretch{1.3}
  \addtolength{\tabcolsep}{0pt}
  \centering
  \tablestyle{3pt}{1.0}
  \caption{The performance comparisons between different methods. Default settings are marked in \colorbox{baselinecolor}{gray}. \textbf{bold} marks the best-performing results.}
  \resizebox{\linewidth}{!}{%
    \begin{tabular}{cccccc}
    \toprule
    Methods & Datasets & \# Params (M) & \# Trainable Params (M) & Success Rate (\%) & $\text{Pass}@50$ (\%) \\
    \midrule
    Centralized & LIBERO-Goal & 3882.72 & 128.10 & 75.8  & 100 \\
    FedAvg & LIBERO-Goal & 3882.72 & 128.10 & 28.8  & 80 \\
    FedVLA$^*$ & LIBERO-Goal & 79.23 & 79.23 & 0.2   & 10 \\
    FedVLA$^*$ + CLIP & LIBERO-Goal & 519.82 & 92.90  & 6.2   & 20 \\
    \baseline{ForgeVLA} & \baseline{LIBERO-Goal} & \baseline{3882.72} & \baseline{128.10} & \baseline{\textbf{55.2}\textcolor{ForestGreen}{\scriptsize $^\uparrow$26.4\%}}  & \baseline{100} \\
    \midrule
    Centralized & LIBERO-Object & 3882.72 & 128.10 & 98.8  & 100 \\
    FedAvg & LIBERO-Object & 3882.72 & 128.10 & 97.6  & 100 \\
    FedVLA$^*$ & LIBERO-Object & 79.23 & 79.23 & 2.2   & 10 \\
    FedVLA$^*$ + CLIP & LIBERO-Object & 519.82 & 92.90  & 18.2  & 30 \\
    \baseline{ForgeVLA} & \baseline{LIBERO-Object} & \baseline{3882.72} & \baseline{128.10} & \baseline{\textbf{98.6}\textcolor{ForestGreen}{\scriptsize $^\uparrow$1.0\%}}  & \baseline{100} \\
    \midrule
    Centralized & LIBERO-Spatial & 3882.72 & 128.10 & 85.8  & 100 \\
    FedAvg & LIBERO-Spatial & 3882.72 & 128.10 & 68.6  & 90 \\
    FedVLA$^*$ & LIBERO-Spatial & 79.23 & 79.23 & 0.4   & 10 \\
    FedVLA$^*$ + CLIP & LIBERO-Spatial & 519.82 & 92.90  & 11.6  & 20 \\
    \baseline{ForgeVLA} & \baseline{LIBERO-Spatial} & \baseline{3882.72} & \baseline{128.10} & \baseline{\textbf{72.6}\textcolor{ForestGreen}{\scriptsize $^\uparrow$4.0\%}}  & \baseline{100} \\
    \midrule
    Centralized & LIBERO-10 & 3882.72 & 128.10 & 79    & 100 \\
    FedAvg & LIBERO-10 & 3882.72 & 128.10 & 52.8  & 100 \\
    FedVLA$^*$ & LIBERO-10 & 79.23 & 79.23 & 0.4   & 10 \\
    FedVLA$^*$ + CLIP & LIBERO-10 & 519.82 & 92.90  & 9.4   & 20 \\
    \baseline{ForgeVLA} & \baseline{LIBERO-10} & \baseline{3882.72} & \baseline{128.10} & \baseline{\textbf{63.6}\textcolor{ForestGreen}{\scriptsize $^\uparrow$10.8\%}}  & \baseline{100} \\
    \bottomrule
    \end{tabular}%
    }
  \label{tab:main-results}%
\vspace{-10pt}
\end{table}%

\begin{wraptable}{l}[0pt]{0.4\linewidth}
\vspace{-10pt}
  \def\arraystretch{1.3}
  \addtolength{\tabcolsep}{0pt}
  \centering
  \tablestyle{3pt}{1.0}
  \caption{Well-established non-i.i.d. mitigation strategies such as FedProx~\cite{li2020fedprox} result in even worse performance than vanilla FedAvg.}
    \begin{tabular}{ccc}
    \toprule
    Methods & $\lambda$ & Success Rate (\%) \\
    \midrule
    \multicolumn{3}{c}{Libero-Goal} \\
    \midrule
    FedAvg & 0.0     & 28.0 \\
    FedProx & 0.1   & 11.2 \\
    FedProx & 0.2   & 8.4 \\
    FedProx & 0.5   & 4.2 \\
    \baseline{ForgeVLA} & \baseline{0.0}     & \baseline{\textbf{55.2}} \\
    \bottomrule
    \end{tabular}%
  \label{tab:fedprox-exp}%
\vspace{-10pt}
\end{wraptable}

\textbf{Main Results}
Table~\ref{tab:main-results} presents the main performance comparisons.
Our ForgeVLA achieves the best performance among all federated methods across all settings, significantly outperforming other federated baselines while using the same parameter budget as FedAvg, and substantially narrowing the gap with the centralized training upper bound.
Specifically, for the core success rate metric, ForgeVLA achieves 55.2\% on LIBERO-Goal, a 26.4pp improvement over FedAvg, while existing federated VLA methods nearly fail with success rates below 10\% (noting the backbone capacity difference discussed above).
On LIBERO-Object, ForgeVLA reaches 98.6\%, nearly matching the 98.8\% centralized upper bound.
For $\text{Pass}@50$, ForgeVLA achieves a perfect 100\% across all datasets, on par with the centralized oracle and far exceeding all other federated baselines.
Notably, ForgeVLA introduces no additional trainable parameters compared to FedAvg, further highlighting its efficiency.

\textbf{FedProx Performance Degradation}:
We conduct experiments on LIBERO-Goal to evaluate FedProx~\cite{li2020fedprox} with varying proximal regularization intensity controlled by $\lambda$ in Table~\ref{tab:fedprox-exp}. 
From the table, we observe that FedProx exhibits severe performance degradation as $\lambda$ increases.
Specifically, starting from 28.0\% for vanilla FedAvg ($\lambda=0.0$), the success rate drops to 11.2\% at $\lambda=0.1$, 8.4\% at $\lambda=0.2$, and only 4.2\% at $\lambda=0.5$, while our ForgeVLA framework achieves 55.2\% under the same setting. 
This result directly supports our claim that classical client-drift mitigation strategies designed for standard non-i.i.d. FL are insufficient for heterogeneous VLA federated learning, because the failure mode is vision-language feature collapse (Section~3.2) rather than client drift, and validates the effectiveness of ForgeVLA's targeted approach.

\begin{wraptable}{r}[0pt]{0.53\linewidth}
\vspace{-10pt}
  \def\arraystretch{1.3}
  \addtolength{\tabcolsep}{0pt}
  \centering
  \tablestyle{3pt}{1.0}
  \caption{Data \& model scales for the classifier.}
    \begin{tabular}{cccc}
    \toprule
    $r$     & $p$     & \# Trainable Params (M) & Accuracy (\%) \\
    \midrule
    \multicolumn{4}{c}{Libero-Goal} \\
    \midrule
    4     & 0.9   & 3.36  & 98.34 \\
    4     & 0.5   & 3.36  & 97.96 \\
    4     & 0.2   & 3.36  & 92.30 \\
    4     & 0.1   & 3.36  & 83.56 \\
    4     & ODPT  & 3.36  & 76.68 \\
    8     & ODPT  & 4.61  & 83.34 \\
    16    & ODPT  & 7.10   & 85.56 \\
    32    & ODPT  & 12.10  & 85.67 \\
    \bottomrule
    \end{tabular}%
  \label{tab:scaling-analysis-classifier}%
\vspace{-10pt}
\end{wraptable}

\textbf{Data \& Model Scale Analysis for the Classifier}:
\label{sec:scaling-analysis-classifier}
We evaluate our instruction classifier across varying data scales and model capacities on LIBERO-Goal, with results shown in Table~\ref{tab:scaling-analysis-classifier}. 
Fixed at LoRA rank $r=4$, the classifier retains over 92\% accuracy with 20\% of full training data ($p=0.2$), and maintains 76.68\% accuracy in the most challenging one-data-per-task (ODPT) setting. 
In the ODPT setting, increasing LoRA rank $r$ consistently improves accuracy, with performance saturating at $r=16$ (85.56\%). 
These results confirm our classifier’s high reliability under limited data and small model scales, providing robust support for ForgeVLA.

\begin{wraptable}{r}[0pt]{0.53\linewidth}
\vspace{-10pt}
  \def\arraystretch{1.3}
  \addtolength{\tabcolsep}{0pt}
  \centering
  \tablestyle{3pt}{1.0}
  \caption{Ablation Study of Components in ForgeVLA}
    \begin{tabular}{ccc}
    \toprule
    CP Loss & Adaptive Aggregation & Success Rate (\%) \\
    \midrule
           &       & 28.8 \\
     $\checkmark$     &       & 36.2 \\
           & $\checkmark$     & 49 \\
     $\checkmark$     & $\checkmark$     & 55.2 \\
    \bottomrule
    \end{tabular}%
  \label{tab:ablations}%
\vspace{-5pt}
\end{wraptable}

\textbf{Ablation Study}:
We conduct an ablation study on LIBERO-Goal to validate the contribution of our two core components with $E=10$ and $P=10$ in Table~\ref{tab:ablations}. 
As it can be seen, removing both components recovers the vanilla FedAvg baseline (28.8\%). 
Adding only the CP loss improves performance to 36.2\%, demonstrating its effectiveness in preserving task-specific vision-language alignment. 
Additionally, adding only the adaptive aggregation yields a larger improvement to 49.0\%.
Combining both components achieves the full performance of ForgeVLA at 55.2\%, showing their complementary benefits.

\begin{table*}[hbpt]
\vspace{-5pt}
\centering
\caption{
We conduct an ablation study on ForgeVLA's key hyperparameters with $E=10$ and $P=10$ to evaluate the effectiveness of our design choices. 
Default settings are marked in \colorbox{baselinecolor}{gray}.
\textbf{bold} marks the best-performing results.
}
\vspace{-5pt}
\subfloat[The number of task per client.\label{tab:abl-s}]{
\centering
\begin{minipage}[h]{0.40\textwidth}
\begin{center}

\centering
\tablestyle{5.5pt}{0.9}
\begin{tabular}{ccc}
\toprule
Method & $s$     & Success Rate (\%) \\
\midrule
FedAvg & 1     & 0.8 \\
FedAvg & 2     & 4.2 \\
\baseline{FedAvg} & \baseline{3}     & \baseline{17.6} \\
FedAvg & 5     & 75.6 \\
FedAvg & 10    & 93.2 \\
\midrule
ForgeVLA & 1     & 2 \\
ForgeVLA & 2     & 28.7 \\
\baseline{ForgeVLA} & \baseline{3}     & \baseline{41.6} \\
ForgeVLA & 5     & 85.6 \\
ForgeVLA & 10    & 95.4 \\
\bottomrule
\end{tabular}%

\end{center}
\end{minipage}
}
\hfill
\subfloat[The number of rank during training.\label{tab:abl-r}]{
\centering
\begin{minipage}[h]{0.58\textwidth}
\begin{center}

\centering
\tablestyle{5.5pt}{0.9}
\begin{tabular}{cccc}
\toprule
Method & $r$     & \# Trainable (M) & Success Rate (\%) \\
\midrule
FedAvg & 4     & 119.33 & 9.8 \\
FedAvg & 8     & 120.58 & 14.6 \\
FedAvg & 16    & 123.09 & 18.6 \\
\baseline{FedAvg} & \baseline{32}    & \baseline{128.10} & \baseline{17.6} \\
FedAvg & 64    & 138.13 & 32 \\
\midrule
ForgeVLA & 4     & 119.33 & 36.4 \\
ForgeVLA & 8     & 120.58 & 36.8 \\
ForgeVLA & 16    & 123.09 & 37.2 \\
\baseline{ForgeVLA} & \baseline{32}    & \baseline{128.10} & \baseline{41.6} \\
ForgeVLA & 64    & 138.13 & 42.6 \\
\bottomrule
\end{tabular}%

\end{center}
\end{minipage}}
\\
\vspace{5pt}
\centering
\subfloat[The hyperparameter $\alpha_\text{CP}$.\label{tab:abl-cp}]{
\centering
\begin{minipage}[h]{0.48\textwidth}
\begin{center}

\centering
\tablestyle{10pt}{0.9}
\begin{tabular}{ccc}
\toprule
Method & $\alpha_\text{CP}$ & Success Rate (\%) \\
\midrule
ForgeVLA & 0     & 35.2 \\
ForgeVLA & 0.1   & 36 \\
\baseline{ForgeVLA} & \baseline{0.2}   & \baseline{\textbf{41.6}} \\
ForgeVLA & 0.5   & 37 \\
ForgeVLA & 1     & 36 \\
ForgeVLA & 2     & 33.8 \\
\bottomrule
\end{tabular}%

\end{center}
\end{minipage}
}
\hfill
\subfloat[The hyperparameter $\alpha_\text{AG}$.\label{tab:abl-ag}]{
\centering
\begin{minipage}[h]{0.48\textwidth}
\begin{center}

\centering
\tablestyle{10pt}{1.25}
\begin{tabular}{ccc}
\toprule
Method & $\alpha_\text{AG}$ & Success Rate (\%) \\
\midrule
ForgeVLA & 0     & 37.2 \\
ForgeVLA & 0.05  & 38.8 \\
\baseline{ForgeVLA} & \baseline{0.1}   & \baseline{\textbf{41.6}} \\
ForgeVLA & 0.2   & 35.8 \\
\bottomrule
\end{tabular}%

\end{center}
\end{minipage}
}
\label{tab:sensitivity}
\vspace{-5pt}
\end{table*}

\textbf{Sensitivity Analysis}:
We conduct a sensitivity analysis on four key hyperparameters of ForgeVLA, with results shown in Table~\ref{tab:sensitivity}. 
First, for the number of tasks per client $s$, both FedAvg and ForgeVLA improve as $s$ increases due to richer and more diverse local data, but ForgeVLA still significantly outperforms FedAvg across all $s$, especially in highly heterogeneous settings (\textit{e.g.}, 2\% vs 0.8\% at $s=1$, 41.6\% vs 17.6\% at $s=3$). 
Second, for LoRA rank $r$, ForgeVLA’s performance saturates at the default $r=32$ with only mild gains from larger $r$, and it outperforms FedAvg at all ranks as well, showing no reliance on excessive trainable parameters. 
Third, for $\alpha_\text{CP}$, performance first rises then falls, peaking at the default 0.2, indicating moderate contrastive planning loss preserves task specificity. 
Fourth, for $\alpha_\text{AG}$, performance follows a similar trend, peaking at the default 0.1.
Overall, ForgeVLA remains stable across reasonable hyperparameter ranges, with default settings achieving the best efficiency-performance trade-off.

\begin{figure}[tb]
    \centering
    \begin{minipage}{0.49\linewidth}
        \centering
        \begin{subfigure}[b]{0.23\linewidth}
            \centering
            \includegraphics[width=\linewidth]{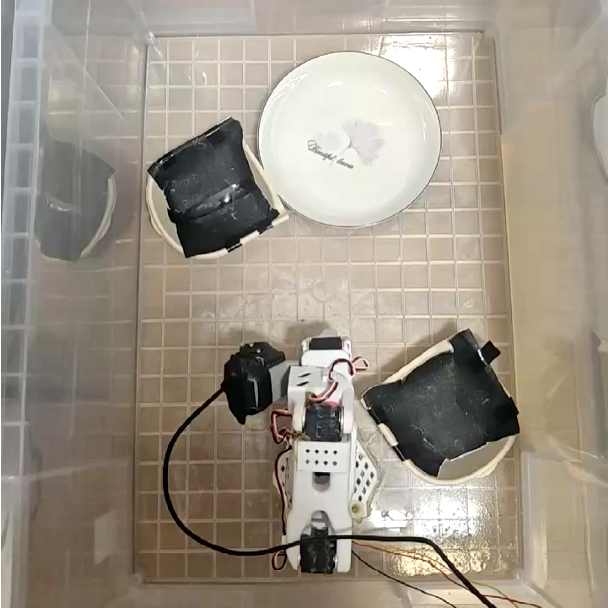}
        \end{subfigure}
        \hfill
        \begin{subfigure}[b]{0.23\linewidth}
            \centering
            \includegraphics[width=\linewidth]{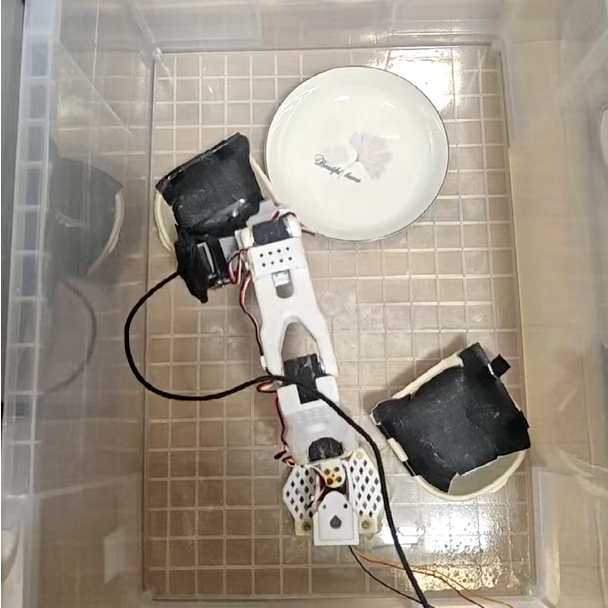}
        \end{subfigure}
        \hfill
        \begin{subfigure}[b]{0.23\linewidth}
            \centering
            \includegraphics[width=\linewidth]{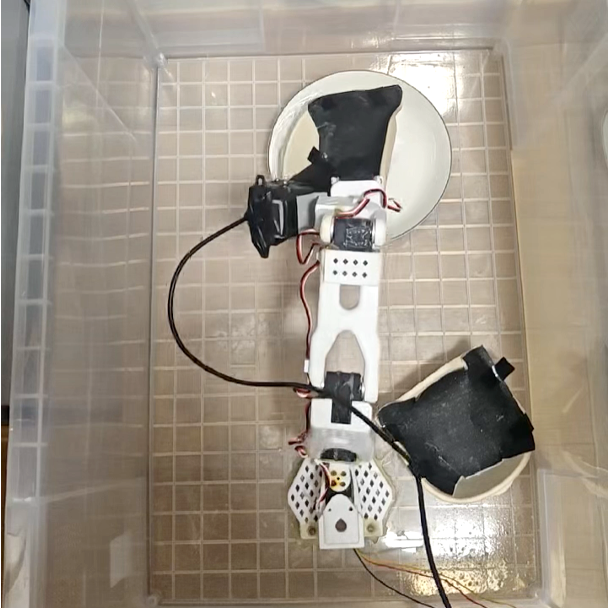}
        \end{subfigure}
        \hfill
        \begin{subfigure}[b]{0.23\linewidth}
            \centering
            \includegraphics[width=\linewidth]{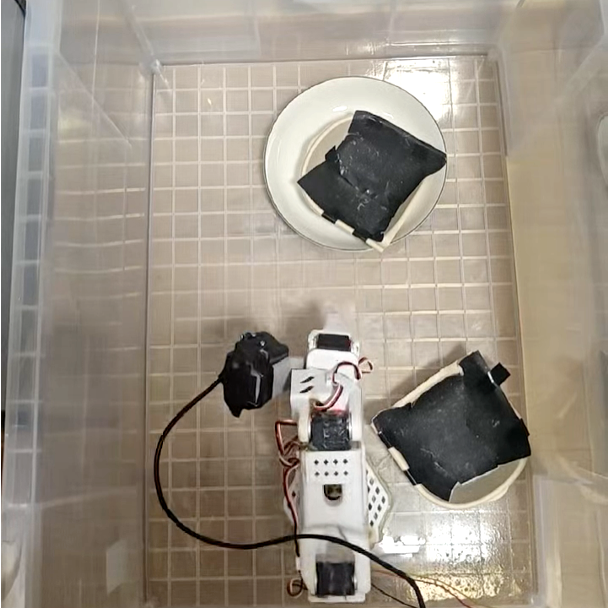}
        \end{subfigure}
        \par
        \vspace{-2pt}
        \centering$\xrightarrow{\hspace{0.8\linewidth}}$
        \vspace{2pt}
        \par
        \begin{subfigure}[b]{0.23\linewidth}
            \centering
            \includegraphics[width=\linewidth]{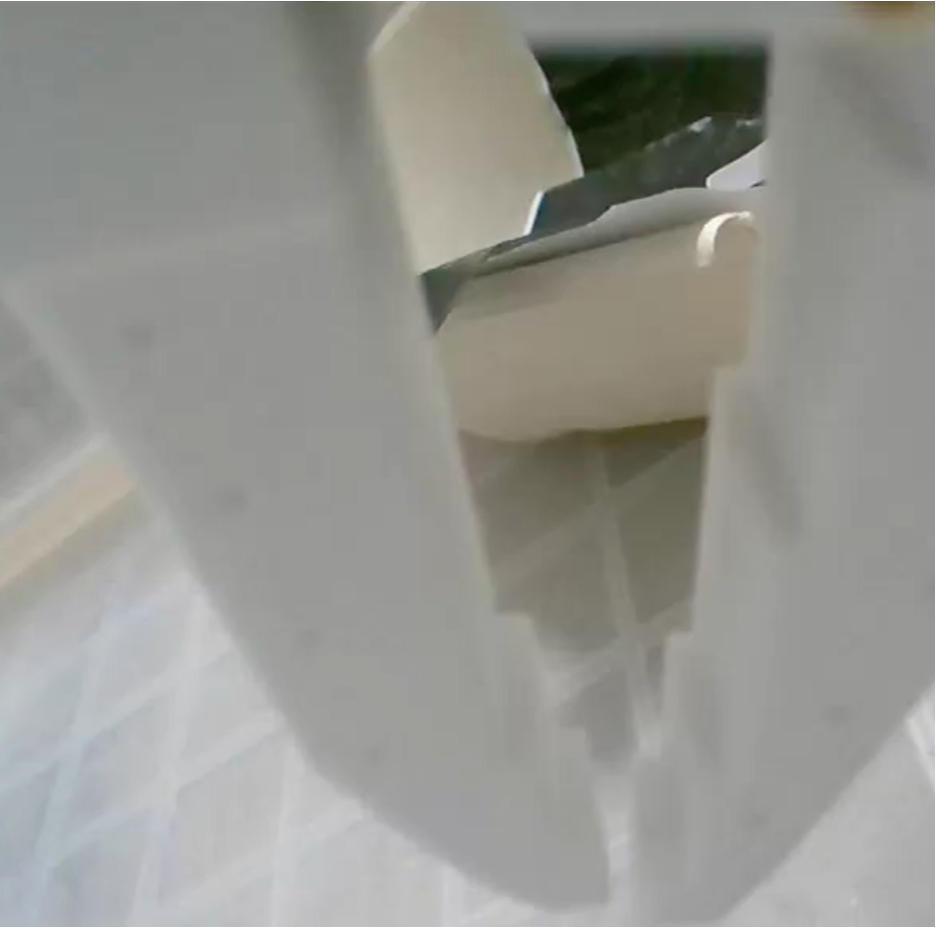}
        \end{subfigure}
        \hfill
        \begin{subfigure}[b]{0.23\linewidth}
            \centering
            \includegraphics[width=\linewidth]{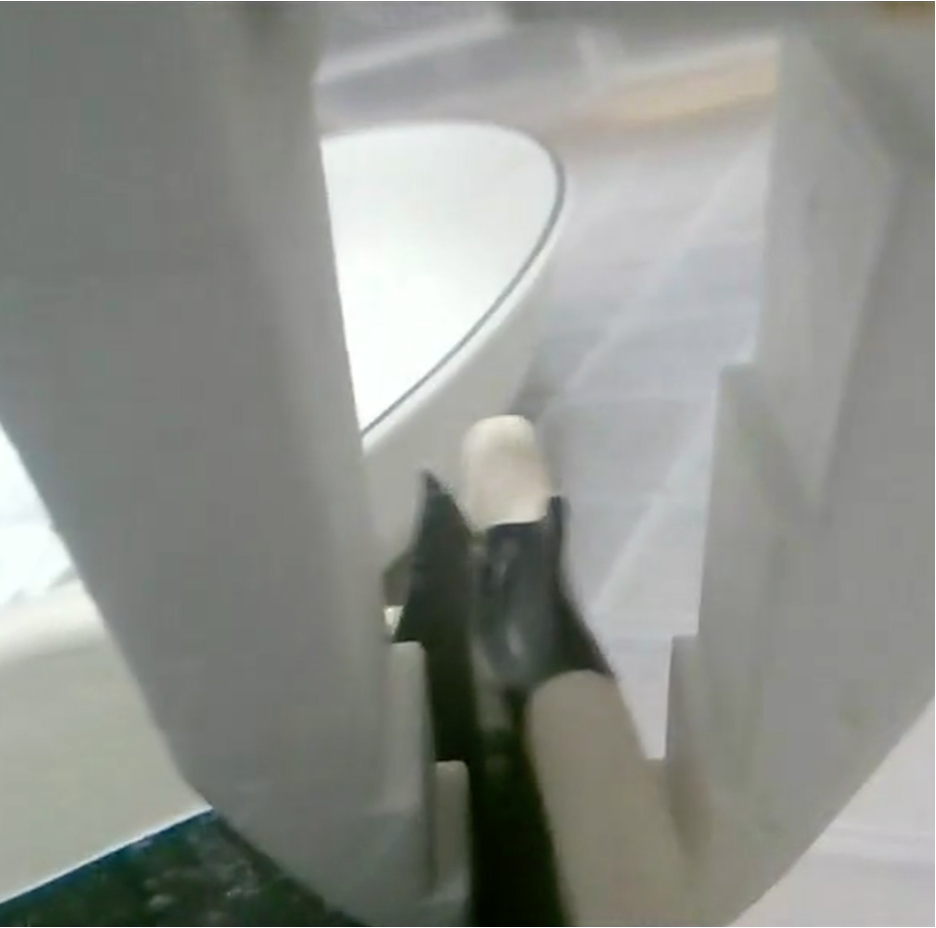}
        \end{subfigure}
        \hfill
        \begin{subfigure}[b]{0.23\linewidth}
            \centering
            \includegraphics[width=\linewidth]{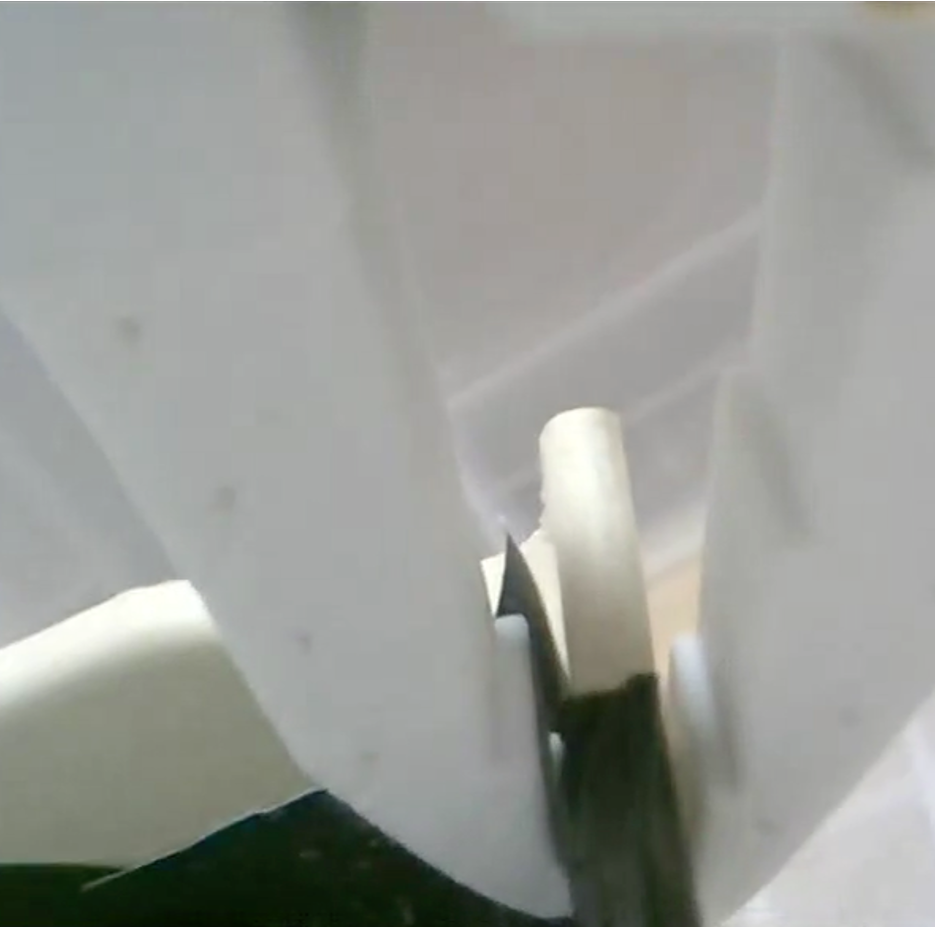}
        \end{subfigure}
        \hfill
        \begin{subfigure}[b]{0.23\linewidth}
            \centering
            \includegraphics[width=\linewidth]{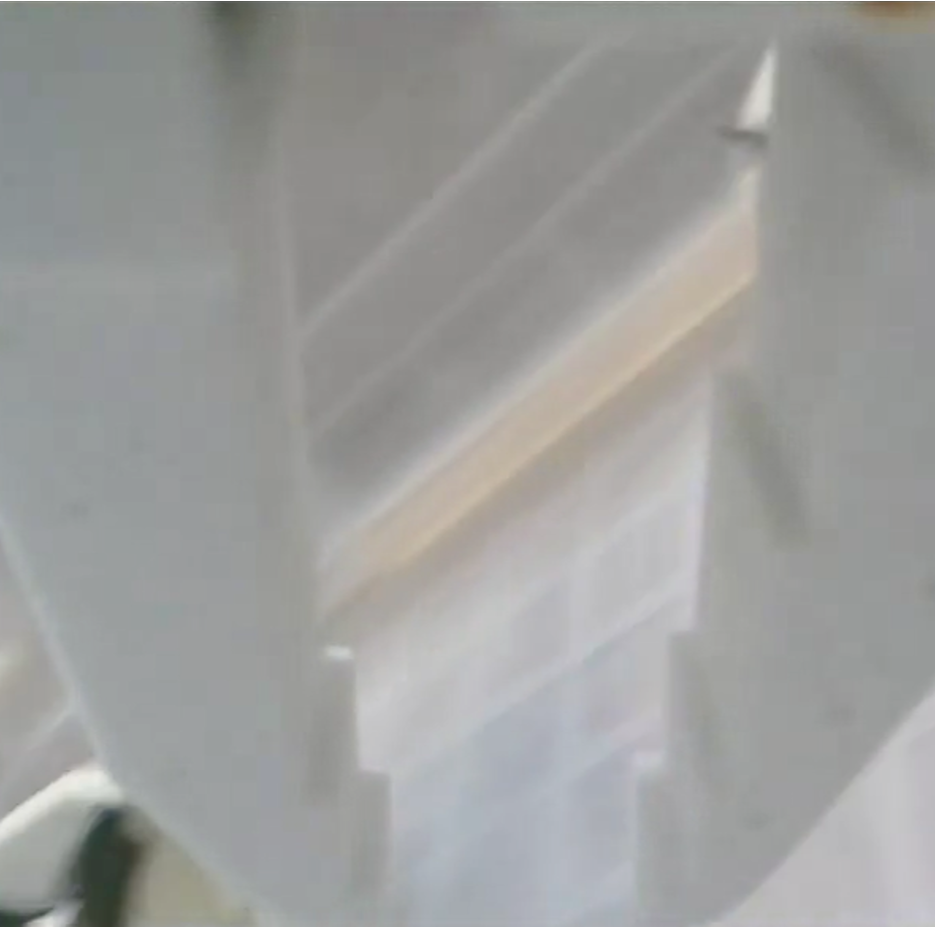}
        \end{subfigure}
        \subcaption{Task: Pick up the black bowl next to the plate and place it on the plate.}
    \end{minipage}
    \hfill
    \begin{minipage}{0.49\linewidth}
        \centering
        \begin{subfigure}[b]{0.23\linewidth}
            \centering
            \includegraphics[width=\linewidth]{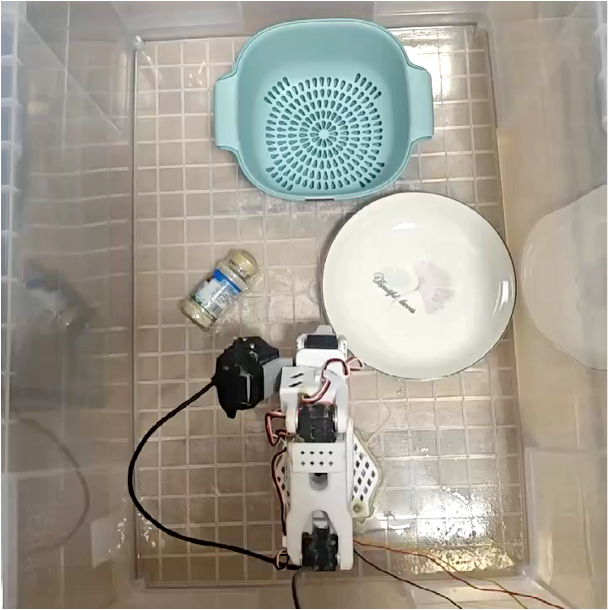}
        \end{subfigure}
        \hfill
        \begin{subfigure}[b]{0.23\linewidth}
            \centering
            \includegraphics[width=\linewidth]{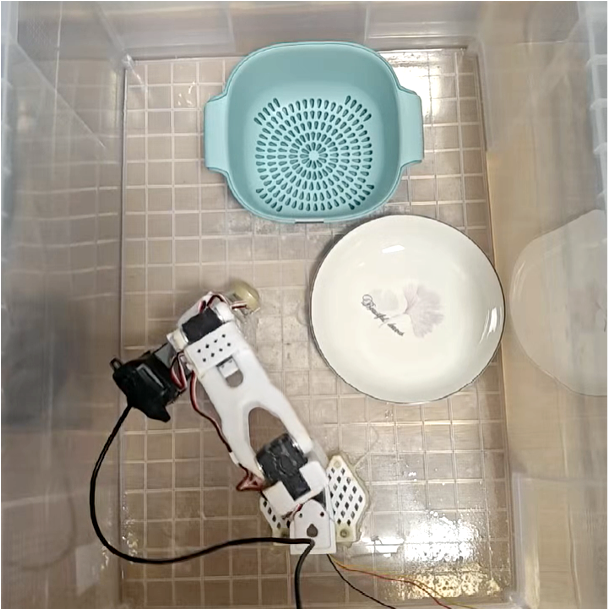}
        \end{subfigure}
        \hfill
        \begin{subfigure}[b]{0.23\linewidth}
            \centering
            \includegraphics[width=\linewidth]{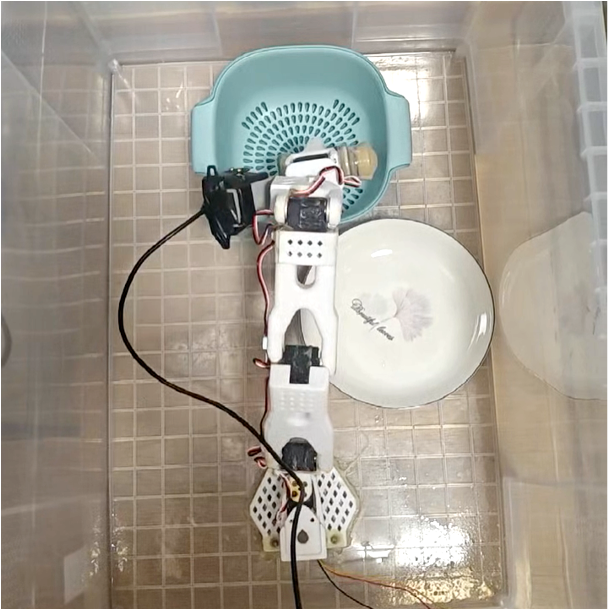}
        \end{subfigure}
        \hfill
        \begin{subfigure}[b]{0.23\linewidth}
            \centering
            \includegraphics[width=\linewidth]{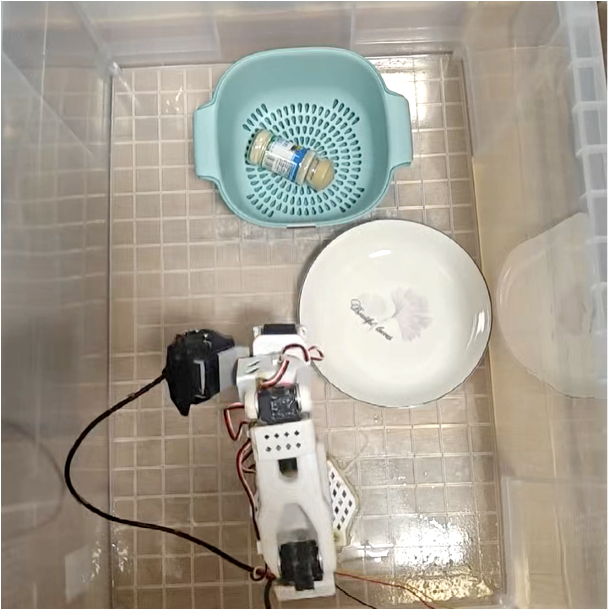}
        \end{subfigure}
        \par
        \vspace{-2pt}
        \centering$\xrightarrow{\hspace{0.8\linewidth}}$
        \vspace{2pt}
        \par
        \begin{subfigure}[b]{0.23\linewidth}
            \centering
            \includegraphics[width=\linewidth]{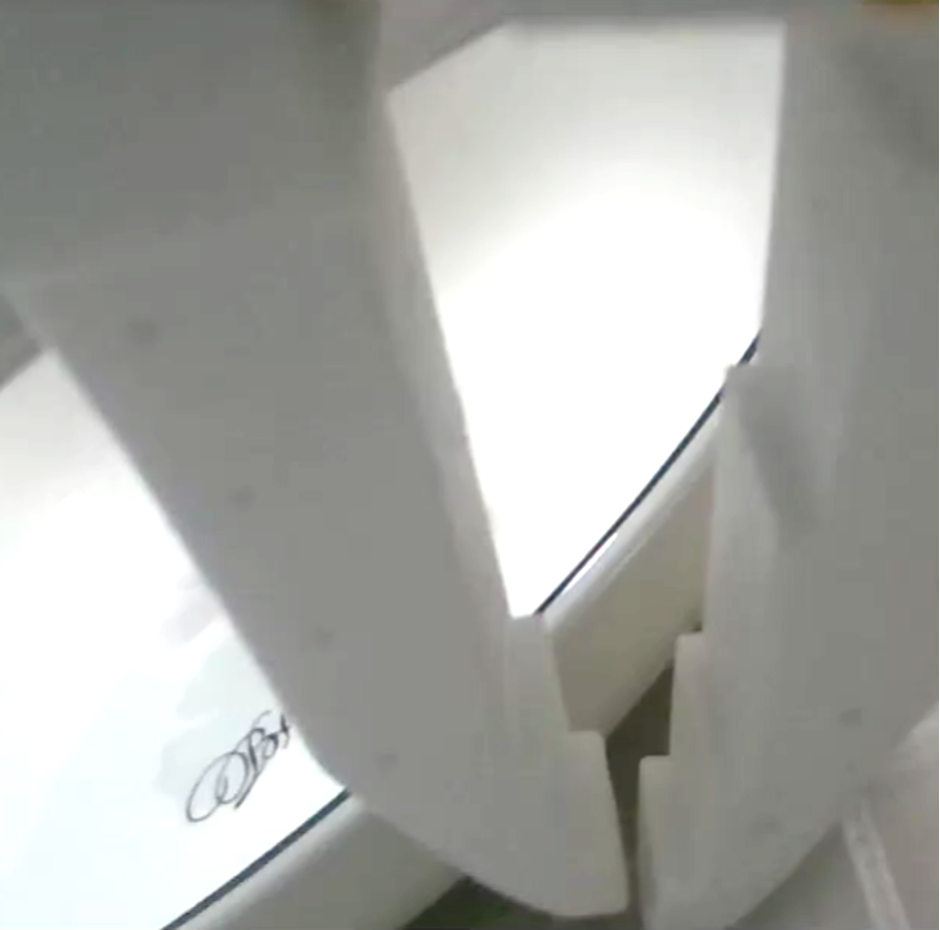}
        \end{subfigure}
        \hfill
        \begin{subfigure}[b]{0.23\linewidth}
            \centering
            \includegraphics[width=\linewidth]{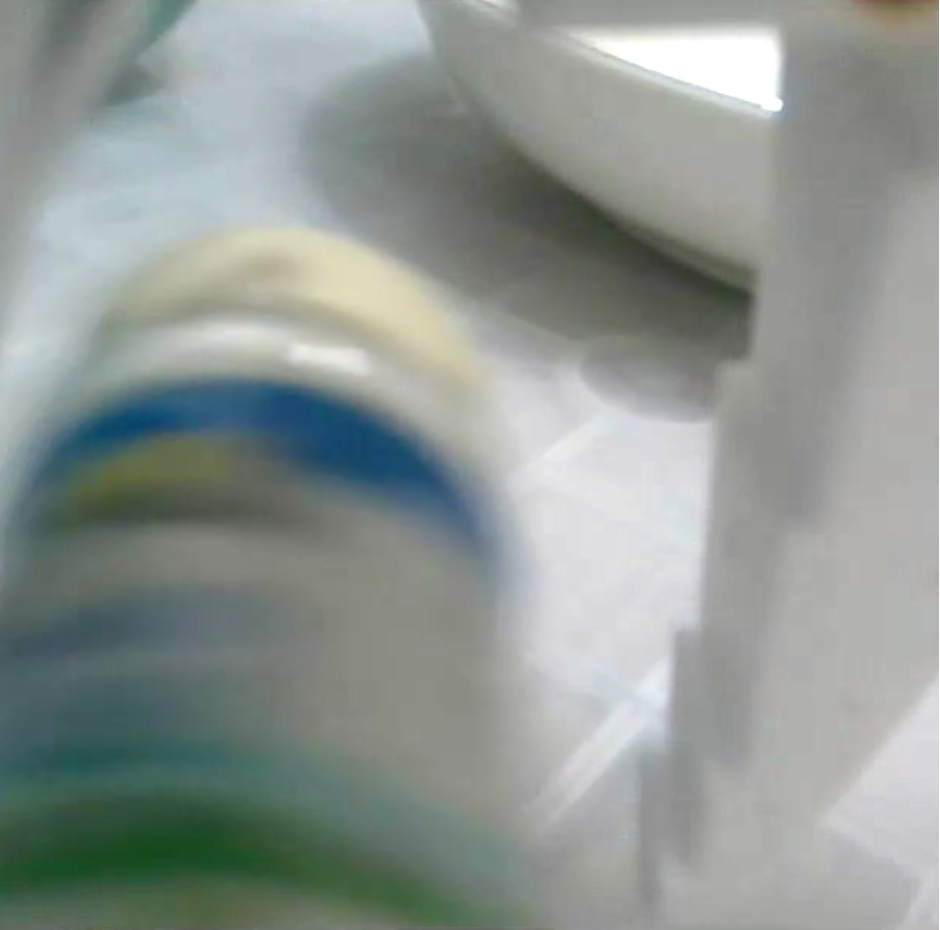}
        \end{subfigure}
        \hfill
        \begin{subfigure}[b]{0.23\linewidth}
            \centering
            \includegraphics[width=\linewidth]{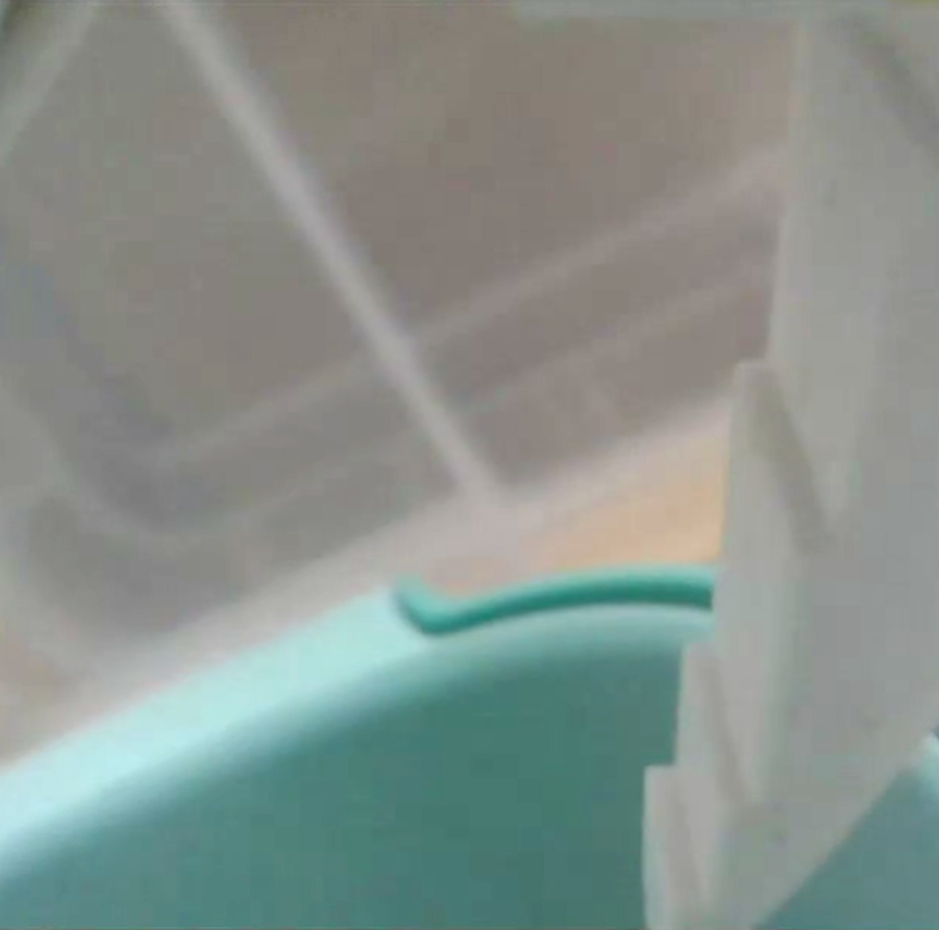}
        \end{subfigure}
        \hfill
        \begin{subfigure}[b]{0.23\linewidth}
            \centering
            \includegraphics[width=\linewidth]{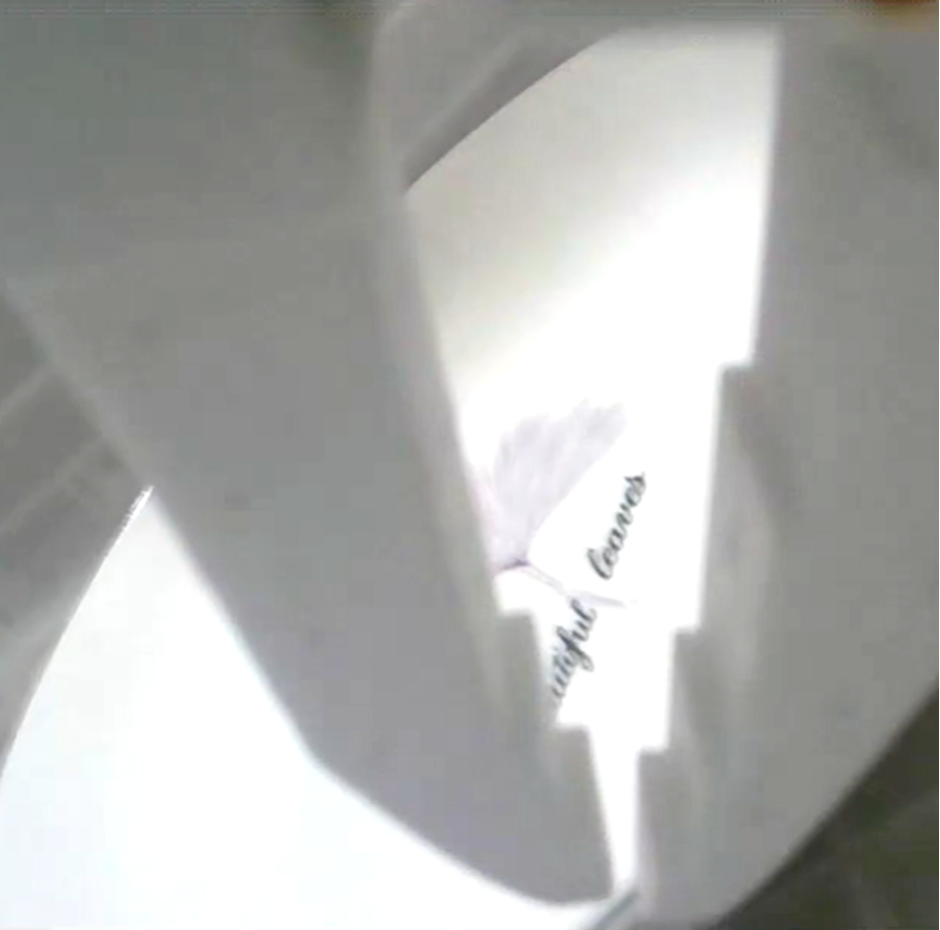}
        \end{subfigure}
        \subcaption{Task: Pick up the barbecue sauce and place it in the basket.}
    \end{minipage}

    \caption{Real-world deployment of the trained ForgeVLA on the SO-ARM101 robotic platform.}
    \label{fig:real-world-deployment}
    \vspace{-15pt}
\end{figure}

\textbf{Real-world Deployment}:
We deploy the trained ForgeVLA model on the physical robotic platform SO-ARM101 and evaluate it on representative tasks to provide a qualitative assessment of real-world transferability, as shown in Fig.~\ref{fig:real-world-deployment}.
The robot successfully executes tested tasks, providing preliminary evidence that the learned policy transfers to physical systems.
Owing to limited resources, this deployment is a proof-of-concept rather than a full real-world evaluation, and a systematic and quantitative study across a broader task set remains future work.

\vspace{-5pt}
\section{Conclusion}
\vspace{-5pt}

We propose ForgeVLA, a federated VLA training framework that unlocks distributed unannotated vision--action robotic logs for privacy-preserving VLA scaling.
ForgeVLA recovers missing language annotations on-device via classification over a predefined instruction set, and mitigates the previously overlooked vision-language feature collapse in heterogeneous federated VLA settings through a contrastive planning loss and an adaptive aggregation strategy.
Extensive experiments show ForgeVLA significantly outperforms federated baselines across benchmarks.
We believe ForgeVLA provides a practical paradigm to harness distributed real-world robotic data for advancing VLA models, and we discuss promising directions for extending the framework, including open-vocabulary instruction recovery, stronger privacy guarantees, etc., in the Appendix.


{
\small

}

\newpage
\appendix

\etocsettocdepth.toc{subsection}
\etocsettocstyle{\section*{Table of Contents}}{}
\etocsetstyle{section}{}{}{\noindent\bfseries\etocnumber\hspace{0.5em}\etocname\dotfill\etocpage\par\vspace{0pt}}{}
\etocsetstyle{subsection}{}{}{\noindent\hspace{1.5em}\etocnumber\hspace{0.5em}\etocname\dotfill\etocpage\par\vspace{0pt}}{}
\tableofcontents
\vspace{1em}

\section{Additional Related Work}

\subsection{Vision Language Action Models and Data Scaling}
Vision Language Action models treat robot control as a single prediction problem over images, instructions, and actions. 
RT-2~\citep{brohan2023rt2} showed that large vision language backbones can be adapted to robotic trajectories, while OpenVLA~\cite{kim2024openvla} and OpenVLA-OFT~\cite{kim2025openoft} made this line easier to study at scale and improved practical efficiency. 
Recent systems push the frontier in different ways. $\pi_0$~\cite{black2024pi0}, CogACT~\cite{li2024cogact}, SpatialVLA~\cite{qu2025spatialvla}, InternVLA-M1~\cite{chen2025internvla}, and DexVLA~\cite{wen2025dexvla} improve dexterous control, spatial reasoning, and action modeling. 
Octo~\cite{ghosh2024octo}, CrossFormer~\cite{doshi2024crossformer}, and HPT~\cite{wang2024hpt} study transfer across embodiments and datasets.
A common pattern is that these methods still depend on carefully annotated demonstrations.

That dependence is hard to scale in robotics. 
Prior work has tried to ease the bottleneck with data augmentation, simulation, and generated trajectories~\citep{laskin2020reinforcement, kostrikov2020image, mandlekar2023mimicgen, james2020rlbench, mu2021maniskill, tobin2017domain, du2024learning, black2024zeroshot, ko2023learning, yu2023scaling}. These approaches are useful, but they still rely on curated seed data or they struggle to produce reliable action supervision in the real world. 
Another practical line focuses on parameter efficient adaptation. 
LoRA~\cite{hu2022lora} is especially relevant because it makes large VLA backbones easier to tune and reduces the communication burden in distributed training~\citep{kim2025openoft, chen2025internvla}. 
ForgeVLA is complementary to all of these efforts. Instead of synthesizing more trajectories, it turns existing vision action logs into usable VLA training data.

\subsection{Federated Learning for Robot Policies and VLA}
Federated learning lets multiple clients train a shared model without uploading raw data~\citep{mcmahan2017fedavg, bonawitz2019towards, kairouz2021advances}. 
Its main difficulty is heterogeneity across clients. 
When local data distributions differ, client updates drift apart and the global model becomes harder to optimize~\citep{zhao2018federated, li2020fedprox, karimireddy2020scaffold, wang2020fednova, acar2021feddyn, reddi2021adaptive}. 
This issue is especially sharp in robot learning because tasks, embodiments, and operating environments vary widely from one client to another.

Several studies bring federated training closer to robotics.
FLAME~\cite{bou2025flame} provides a benchmark for federated manipulation and makes the performance gap under task heterogeneity easy to measure. 
FedVLA~\cite{cui2025fedvla} adapts federated training to VLA models with instruction guided scene parsing, expert routing, and expert aware aggregation. 
Related work on cloud robotics, federated imitation learning, vision and language navigation, and offline reinforcement learning also shows that privacy preserving collaboration is feasible for distributed robot systems~\citep{kehoe2015survey, liu2020fedil, zhou2022fedvln, rengarajan2024fedora}. 
Still, most prior work assumes either fully annotated multimodal data or simpler policy classes. 
ForgeVLA studies a more practical setting in which each client only stores vision action logs and must recover the missing language signal locally.

\subsection{Representation Alignment in Federated Training}
Another relevant thread studies representation alignment as a way to handle non-i.i.d. federated training. 
MOON~\cite{li2021model} uses contrastive objectives to keep local representations close to the global model, and FedProto~\cite{tan2022fedproto} exchanges prototypes so that clients can align features around shared anchors. 
Beyond these two foundational methods, subsequent work has expanded the toolkit considerably. 
FedNTD~\cite{lee2022fedntd} draws an analogy to continual learning and preserves global dark knowledge on non-ground-truth classes via self-distillation during local training. 
FedGen~\cite{zhu2021fedgen} trains a lightweight server-side generator to ensemble client knowledge without any proxy data and broadcasts it to regularize local updates. 
FedDecorr~\cite{shi2023feddecorr} identifies that data heterogeneity causes dimensional collapse, where learned features reside in a lower-dimensional subspace, and applies a decorrelation regularizer to encourage full-rank representations. 
Seo et al.~\cite{seo2024relaxed} show that supervised contrastive learning in FL can itself trigger representation collapse and propose a relaxed contrastive loss that penalizes excessively similar within-class pairs. 
Complementary directions include optimal-transport-based prompt cooperation for aligning global and local representations~\citep{li2024global}, matryoshka-style nested representations that handle model heterogeneity across clients~\citep{yi2024federated}, and personalized priors that reactivate information overlooked during standard aggregation~\citep{shi2023prior}. 
Collectively, these methods show that feature level regularization can be more effective than only constraining parameter updates.

Multimodal federated settings introduce additional challenges. 
CreamFL~\cite{yu2023creamfl} tackles heterogeneous modalities and model architectures by combining inter-modal and intra-modal contrastive losses with a global-local representation ensemble, demonstrating that cross-modal alignment is critical when clients observe different data modalities. 
Federated VLA makes this problem even harder because the model must preserve agreement between visual content, language instructions, and action predictions in one shared latent space. 
In our setting, weak alignment leads to vision language feature collapse, where task representations lose separation across clients. Generic contrastive FL methods do not fully address this multimodal failure mode. 
ForgeVLA therefore uses a contrastive planning loss with a shared task representation bank so that inter task structure remains stable throughout training.

\section{Algorithm of ForgeVLA}

The complete pseudocode of ForgeVLA is presented in Algorithm~\ref{alg:forgevla-algorithm}.

\begin{algorithm}[htbp]
    \caption{ForgeVLA}
    \label{alg:forgevla-algorithm}
    \textbf{Input}: initial global model parameters $\theta^0$ and $\phi$, local dataset $D_i=\{(v_i^k, a_i^k)\}_{k=1}^{K_i}$, local learning rate $\eta_i$\\
    \textbf{Parameter}: number of communication rounds $E$, number of local optimization iterations $P$, number of clients $N$, participation rule $\{p_i^e\}$\\
    \textbf{Output}: global model parameters $\theta^E$\\
    \textbf{Server:}
    \begin{algorithmic}[1] %
        \STATE fine-tune $\phi$ on public VLA pairs
        \STATE initialize the normalized global task bank $U^0 = \{ \hat{u}_1^0, \hat{u}_2^0, \dots, \hat{u}_M^0 \}$
        \FOR{each communication round $e$ from $0$ to $E-1$}
            \STATE sample a participating set $S^e \subseteq [N]$ according to $\{p_i^e\}_{i=1}^N$
            \STATE distribute $\theta^e$, $\phi$, and $U^e$ to the clients in $S^e$
            \STATE each client in $S^e$ executes the routine below in parallel
            \STATE receive $\theta_i^{e,P}$ and $\{ u_{i,1}^e, u_{i,2}^e, \dots, u_{i,M}^e \}$ from each client $i \in S^e$
            \STATE update $U^{e+1}$ using weighted aggregation over the uploaded prototypes; if a task receives no valid aggregate, keep its previous bank entry, and renormalize each updated entry
            \STATE update $\theta^{e+1}$ using Equation~\ref{eq:adaptive-aggregation} on the uploaded client set $S^e$
        \ENDFOR
        \RETURN $\theta^E$
    \end{algorithmic}
    \textbf{Client Routine (executed on participating client $i \in S^e$ at round $e$):}
    \begin{algorithmic}[1] %
        \STATE initialize $\theta_i^{e,0} = \theta^e$
        \FOR{each local iteration $j$ from $1$ to $P$}
            \STATE sample a mini-batch $\xi_i^{e,j}$ from $D_i$
            \STATE form an unbiased stochastic gradient $h_i^{e,j}$ of $\mathcal{L}_i + \mathcal{L}_{\text{CP},i}^e$ at $(\theta_i^{e,j-1}, \xi_i^{e,j})$
            \STATE $\theta_i^{e,j} = \theta_i^{e,j-1} - \eta_i h_i^{e,j}$
        \ENDFOR
        \STATE update $\{ u_{i,1}^e, u_{i,2}^e, \dots, u_{i,M}^e \}$ using $\theta_i^{e,P}$
        \RETURN $\theta_i^{e,P}$ and $\{ u_{i,1}^e, u_{i,2}^e, \dots, u_{i,M}^e \}$
    \end{algorithmic}
\end{algorithm}

\section{Complexity Analysis}

In each communication round, each participating client uses the pretrained embodied instruction classifier to convert raw vision–action logs into a VLA training corpus, and then performs local optimization by jointly minimizing the VLA task loss and the contrastive planning loss. 
The server aggregates the uploaded client updates via adaptive aggregation and updates the global task representation bank.
Throughout, it never accesses raw client data.
Therefore, ForgeVLA incurs no additional asymptotic time complexity beyond vanilla FedAvg, and its runtime remains $\mathcal{O}(NEP)$, where $E$ is the number of communication rounds and $P$ is the number of local optimization iterations per round.
In space, the instruction classifier introduces only constant extra storage, and the global task representation bank scales linearly with the number of tasks $M$ while remaining lightweight in practice.
Consequently, ForgeVLA matches FedAvg’s $\mathcal{O}(N)$ space complexity.

\begin{table}[hbpt]
    \centering
    \caption{The glossary of notations.}
    \resizebox{1.0\linewidth}{!}{
    \begin{tabular}{p{0.2\linewidth}p{0.8\linewidth}}
        \hline
        Notation & Implication \\
        $N$ & Total number of clients \\
        $i$ & Index of a client \\
        $k$ & Sample index within the local dataset $D_i$ \\
        $e$ & Communication-round index \\
        $j$ & Local optimization-iteration index within round $e$ \\
        $S^e$ & Participating client set at communication round $e$ \\
        $\xi_i^e$ & Participation indicator of client $i$ at round $e$ \\
        $\xi_i^{e,j}$ & Mini-batch sampled on client $i$ at local iteration $j$ of round $e$ \\
        $w_i$ & Normalized analysis weight of the $i$-th client, \textit{e.g.}, $w_i = |D_i| / \sum_{j=1}^N |D_j|$, satisfying $\sum_{i=1}^N w_i = 1$ \\
        $\tilde{w}_i^e$ & Unbiased sampled server weight for client $i$ at round $e$ \\
        $E$ & Number of communication rounds \\
        $P$ & Number of local optimization iterations per round \\
        $\eta_i$ & Local learning rate for client $i$ \\
        $D_i$ & Local dataset of client $i$ \\
        $K_i$ & Number of samples in the local dataset $D_i$ \\
        $(v_i^k, a_i^k)$ & The $k$-th vision--action pair in $D_i$ \\
        $l_m$ & The $m$-th instruction in the instruction space $\mathcal{L}$ \\
        $\hat{l}_i^k$ & Predicted instruction from the embodied instruction classifier \\
        $t_i^k$ & Instruction index satisfying $\hat{l}_i^k = l_{t_i^k}$ \\
        $s$ & Number of tasks assigned per client \\
        $p$ & Fraction of annotated data reserved for classifier fine-tuning \\
        $\mathcal{V}$ & Vision observation space \\
        $\mathcal{A}$ & Action space \\
        $\mathcal{L} = \{l_m\}_{m=1}^M$ & Instruction space \\
        $\mathcal{Z}$ & Joint latent embedding space \\
        $M$ & Total number of distinct language instructions \\
        $\theta = (\theta_{\text{enc}}, \theta_{\text{dec}})$ & Global VLA model parameters (encoder and decoder) \\
        $\theta^e$ & Global VLA model at the start of round $e$ \\
        $\theta_i^{e,j}$ & Local VLA model of client $i$ after $j$ local iterations in round $e$ \\
        $f_\theta(v, l)$ & Full VLA model mapping from vision-language input to action \\
        $f_{\theta_{\text{enc}}}: \mathcal{V} \times \mathcal{L} \rightarrow \mathcal{Z}$ & Vision-language encoder \\
        $f_{\theta_{\text{dec}}}: \mathcal{Z} \rightarrow \mathcal{A}$ & Action decoder \\
        $z_i^k$ & Joint latent representation of sample $(v_i^k, a_i^k)$ \\
        $\hat{z}_i^k$ & $\ell_2$-normalized latent representation of sample $(v_i^k, a_i^k)$ \\
        $r$ & LoRA rank applied to the VLM encoder \\
        $c_\phi: \mathcal{V} \times \mathcal{A} \rightarrow \mathcal{L}$ & Embodied instruction classifier \\
        $\phi \in \Phi$ & Parameters of the instruction classifier \\
        $\Phi$ & Hypothesis space of the classifier parameters \\
        $\ell(\cdot, \cdot)$ & Task-specific loss function \\
        $\mathcal{L}_i(\theta; D_i, \phi)$ & Local empirical VLA loss on client $i$ \\
        $\mathcal{L}_{\text{CP},i}^e$ & Contrastive planning loss on client $i$ at round $e$ \\
        $\alpha_{\text{CP}}$ & Scaling hyperparameter for the contrastive planning loss \\
        $\tau$ & Fixed temperature hyperparameter in the contrastive planning loss \\
        $U^e = \{\hat{u}_1^e, \hat{u}_2^e, \dots, \hat{u}_M^e\}$ & Normalized global task representation bank broadcast at round $e$ \\
        $u_{i,m}^e$ & Local task representation of instruction $m$ on client $i$ at round $e$ \\
        $g_i^e$ & Local update of client $i$ at round $e$, \textit{i.e.}, $\theta_i^{e,P} - \theta^e$ \\
        $\alpha_{\text{AG}}$ & Hyperparameter balancing projection alignment and regularization \\
        $\varepsilon_{\text{AG}}$ & Numerical stabilizer in adaptive aggregation \\
        $\text{Pass}@K$ & Probability that a task succeeds at least once within $K$ attempts \\
        $\lambda$ & Proximal regularization intensity in FedProx \\
        \hline
    \end{tabular}
    }
    \label{tab:glossary}
\end{table}

\section{Notation}
The main notations in this paper are shown in Table~\ref{tab:glossary}.

\section{Convergence Analysis}
\label{sec:convergence}

We analyze ForgeVLA as local SGD on the round-frozen objectives \(\{F_i^e\}\) induced by the current bank \(U^e\)~\citep{stich2019local, khaled2020local, karimireddy2020scaffold, reddi2021adaptive}. The purpose of this analysis is to provide a \emph{stability-style guarantee}: under bounded algorithm-induced perturbations, adaptive aggregation and the evolving task bank preserve the FedAvg/local-SGD stationarity order. The theorem does not claim that these components explain the empirical advantage of ForgeVLA over baselines. The analysis is stated as one theorem in which stochastic noise, client heterogeneity, partial participation, adaptive aggregation, and bank drift appear in a single bound.

Let \(n_i \triangleq |D_i|\) and define the normalized client weights
\(
w_i \triangleq n_i / \sum_{k=1}^N n_k
\),
so \(w_i \geq 0\) and \(\sum_{i=1}^N w_i = 1\). We assume a common local step size \(\eta > 0\) and \(P \in \mathbb{N}_+\) local steps per round.
The two extra ForgeVLA hyperparameters are coupled to the local-SGD scale as
\begin{equation}
\alpha_{\text{CP}}
=
\lambda_{\text{CP}} \eta,
\qquad
\alpha_{\text{AG}}
=
\frac{\lambda_{\text{AG}}}{\eta P},
\qquad
\lambda_{\text{CP}} \geq 0,
\quad
\lambda_{\text{AG}} > 0.
\label{eq:theory-hyperparameter-scaling}
\end{equation}
The first scaling keeps the bank term at the same order as the usual stochastic floor, and the second serves as a convenient asymptotic parametrization of the server anchor strength as the local-update scale shrinks. The theorem itself relies on Assumption~\ref{assump:forgevla-regularity} to quantify the objective-level perturbation induced by adaptive aggregation, rather than on this heuristic alone.

As in the rest of the paper, the bank entries are row-wise normalized; for analysis we replace the raw normalization \(\hat{z} = z/\|z\|_2\) by the smooth map
\begin{equation}
\hat{z}
\triangleq
\operatorname{norm}_{\varepsilon_{\mathrm{n}}}(z)
\triangleq
\frac{z}{\sqrt{\|z\|_2^2 + \varepsilon_{\mathrm{n}}^2}},
\qquad
\varepsilon_{\mathrm{n}} > 0,
\end{equation}
so that \(\|\hat{z}\|_2 \leq 1\) deterministically.
This \(\varepsilon_{\mathrm{n}}\)-smoothed map is used only as an analytical surrogate for the proof: the implemented algorithm uses standard row-wise normalization, while the smoothed version removes the singularity at \(z = 0\) and approximates the raw normalization arbitrarily well once \(\|z\|_2 \gg \varepsilon_{\mathrm{n}}\).
At communication round \(e\), the server broadcasts a bank
\(
U^e = [u_1^e;\dots;u_M^e]
\)
with \(\|u_m^e\|_2 \leq 1\) for every \(m\), and this bank is held fixed throughout the subsequent \(P\) local updates.
For one sample with target task \(t\), and with the round-\(e\) bank \(U^e\), define
\begin{equation}
\ell_{\text{CP}}(\hat{z}; U^e, t)
\triangleq
- \alpha_{\text{CP}}
\left(
\frac{\hat{z}^{\top}u_t^e}{\tau}
- \log \sum_{m=1}^{M} \exp\left(\frac{\hat{z}^{\top}u_m^e}{\tau}\right)
\right),
\qquad
\tau > 0.
\end{equation}
Since \(\log \sum_{m=1}^{M} \exp(a_m) \geq a_t\), we have
\(
\ell_{\text{CP}}(\hat{z}; U^e, t) \geq 0
\)
for every \((\hat{z}, U^e, t)\).
Throughout this subsection, \(\mathcal{L}_i(\theta)\) denotes the empirical-average VLA loss on \(D_i\), and \(\mathcal{L}_{\text{CP},i}^e(\theta)\) denotes the empirical average of \(\ell_{\text{CP}}(\hat{z}; U^e, t)\) over the samples in \(D_i\).
For brevity, write \(\mathcal{L}_i(\theta) \equiv \mathcal{L}_i(\theta; D_i, \phi)\) and \(\mathcal{L}_{\text{CP},i}^e(\theta) \equiv \mathcal{L}_{\text{CP},i}^e(\theta; D_i, \phi, U^e)\), and define the round-frozen local and global objectives as
\begin{equation}
F_i^e(\theta) \triangleq \mathcal{L}_i(\theta) + \mathcal{L}_{\text{CP},i}^e(\theta),
\qquad
F^e(\theta) \triangleq \sum_{i=1}^N w_i F_i^e(\theta).
\end{equation}
Let \(\mathcal{F}_e\) denote the sigma-field generated by all randomness up to the start of round \(e\), and let \(\mathcal{F}_{e,j}\) denote the sigma-field generated by all randomness up to and including the \(j\)-th local update on every client in the coupled full-participation round-\(e\) recursion, with \(\mathcal{F}_{e,0} = \mathcal{F}_e\).
Then \(\theta^e\) and \(U^e\) are \(\mathcal{F}_e\)-measurable, and conditioned on \(\mathcal{F}_e\) the inner loop is standard local SGD on \(\{F_i^e\}_{i=1}^N\).

Conditioned on the round-start sigma-field \(\mathcal{F}_e\), each client starts from \(\theta_i^{e,0} = \theta^e\) and performs \(P\) local SGD steps on \(F_i^e\):
\begin{equation}
\theta_i^{e,j}
=
\theta_i^{e,j-1} - \eta h_i^{e,j},
\qquad
j=1,\dots,P,
\end{equation}
where \(h_i^{e,j}\) is a stochastic gradient estimator of \(F_i^e\) at \(\theta_i^{e,j-1}\).
After the local stage, define
\begin{equation}
\bar{\theta}^{e+1}
\triangleq
\sum_{i=1}^N w_i \theta_i^{e,P},
\qquad
g_i^e \triangleq \theta_i^{e,P} - \theta^e,
\qquad
\bar{g}^e \triangleq \bar{\theta}^{e+1} - \theta^e.
\end{equation}
To model partial participation, let \(\xi_i^e \in \{0,1\}\) denote the round-\(e\) participation indicator of client \(i\), let \(p_i^e \triangleq \mathbb{P}(\xi_i^e = 1 \mid \mathcal{F}_e) \in (0,1]\), and define the unbiased sampled server weights and sampled FedAvg anchor by
\begin{equation}
\tilde{w}_i^e
\triangleq
\frac{w_i \xi_i^e}{p_i^e},
\qquad
\tilde{\theta}^{e+1}
\triangleq
\theta^e + \sum_{i=1}^N \tilde{w}_i^e g_i^e.
\label{eq:sampled-weights-anchor}
\end{equation}
Equivalently, all sums below may be restricted to the uploaded client set because \(\tilde{w}_i^e = 0\) for non-participating clients.
The server then applies the adaptive-aggregation update
\begin{equation}
\theta^{e+1}
\triangleq
\arg\min_{\theta}
\sum_{i=1}^N \tilde{w}_i^e
\left(
\frac{(\theta-\theta^e)^{\top} g_i^e}{\|g_i^e\|_2^2 + \varepsilon_{\text{AG}}}
- 1
\right)^2
+ \alpha_{\text{AG}}
\left\|
\theta - \tilde{\theta}^{e+1}
\right\|_2^2,
\label{eq:ag-analysis-update}
\end{equation}
where \(\varepsilon_{\text{AG}} > 0\).
Because the objective in Eq.~\eqref{eq:ag-analysis-update} is strongly convex, \(\theta^{e+1}\) is uniquely defined.

To analyze the bank update, let \(R_i(\theta) = [r_{i,1}(\theta);\dots;r_{i,M}(\theta)]\) denote the task-prototype matrix uploaded by client \(i\) when its local model is \(\theta\).
Let \(\mathcal{M}_{\mathrm{act}} \subseteq [M]\) denote this active row set, and write
\(
M_{\mathrm{act}} \triangleq |\mathcal{M}_{\mathrm{act}}|
\).
For each active task \(m \in \mathcal{M}_{\mathrm{act}}\), the server computes
\begin{equation}
\tilde{u}_m^{e+1}
\triangleq
\sum_{i=1}^N \rho_{i,m}^e \, r_{i,m}(\theta_i^{e,P}),
\qquad
u_m^{e+1}
\triangleq
\operatorname{norm}_{\varepsilon_{\mathrm{n}}}(\tilde{u}_m^{e+1}),
\label{eq:exact-bank-refresh}
\end{equation}
where \(\rho_{i,m}^e \geq 0\), \(\sum_{i=1}^N \rho_{i,m}^e = 1\), and \(\rho_{i,m}^e = 0\) whenever client \(i\) does not upload task \(m\) in round \(e\).
For \(m \notin \mathcal{M}_{\mathrm{act}}\), the server keeps \(u_m^{e+1} = u_m^e\).

\begin{assumption}[Round-Wise Smoothness]
\label{assump:smooth}
For every round \(e\) and client \(i\), the objective \(F_i^e\) is \(L\)-smooth, \textit{i.e.}, for all \(\theta,\theta'\),
\begin{equation}
\|\nabla F_i^e(\theta) - \nabla F_i^e(\theta')\|_2 \leq L\|\theta - \theta'\|_2.
\end{equation}
\end{assumption}
The smoothed normalization keeps the contrastive term well defined, and the assumption above is the standard round-wise smoothness condition used in nonconvex local-SGD and federated optimization analyses~\citep{stich2019local, khaled2020local, karimireddy2020scaffold, reddi2021adaptive}.

\begin{assumption}[Uniform Lower Bound on the Round-Wise Objectives]
\label{assump:lower-bound}
There exists a deterministic constant \(F_{\mathrm{lb}} > -\infty\) such that
\begin{equation}
F^e(\theta) \geq F_{\mathrm{lb}}
\qquad
\text{for all rounds \(e\) and all \(\theta\).}
\end{equation}
\end{assumption}
Uniform lower boundedness is the usual stationarity-analysis condition for smooth nonconvex optimization and local SGD~\citep{stich2019local, reddi2021adaptive}.

Because only the contrastive term depends on the bank, the inter-round objective drift is controlled directly by the bank movement.
For the sample loss \(\ell_{\text{CP}}(\hat{z}; U, t)\),
\begin{equation}
\nabla_{u_m} \ell_{\text{CP}}
=
\frac{\alpha_{\text{CP}}}{\tau}\left(p_m - \mathbf{1}\{m=t\}\right)\hat{z},
\end{equation}
where \(p_m\) is the corresponding softmax probability.
Hence
\begin{equation}
\|\nabla_U \ell_{\text{CP}}\|_F^2
=
\frac{\alpha_{\text{CP}}^2}{\tau^2}
\|\hat{z}\|_2^2
\sum_{m=1}^{M}\left(p_m - \mathbf{1}\{m=t\}\right)^2
\leq
\frac{2\alpha_{\text{CP}}^2}{\tau^2},
\end{equation}
because \(\|\hat{z}\|_2 \leq 1\) and \(\|p - e_t\|_2 \leq \sqrt{2}\).
Therefore \(\ell_{\text{CP}}\) is \((\sqrt{2}\alpha_{\text{CP}}/\tau)\)-Lipschitz in \(U\), and taking empirical averages and the client-weighted sum preserves this constant:
\begin{equation}
|F^{e+1}(\theta) - F^e(\theta)|
\leq
\frac{\sqrt{2}\alpha_{\text{CP}}}{\tau}\|U^{e+1} - U^e\|_F
\qquad
\text{for all rounds \(e\) and all \(\theta\).}
\label{eq:objective-variation-bank}
\end{equation}

\begin{lemma}[Bank Variation Implies Objective Drift]
\label{lem:bank-variation}
Suppose there exists \(C_U \geq 0\) such that, for every round \(e\),
\begin{equation}
\mathbb{E}\!\left[
\|U^{e+1} - U^e\|_F
\middle|
\mathcal{F}_e
\right]
\leq
C_U \eta P.
\label{eq:bank-variation}
\end{equation}
Then, under \(\alpha_{\mathrm{CP}} = \lambda_{\mathrm{CP}}\eta\),
\begin{equation}
\mathbb{E}\!\left[
F^{e+1}(\theta^{e+1}) - F^e(\theta^{e+1})
\middle|
\mathcal{F}_e
\right]
\leq
\frac{\sqrt{2}\lambda_{\mathrm{CP}}C_U}{\tau}\eta^2P.
\label{eq:bank-variation-drift}
\end{equation}
\end{lemma}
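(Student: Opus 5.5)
The plan is to combine the deterministic, pointwise Lipschitz bound~\eqref{eq:objective-variation-bank} with the conditional bank-variation hypothesis~\eqref{eq:bank-variation}. The one subtlety is that $\theta^{e+1}$ is random and not $\mathcal{F}_e$-measurable. However,~\eqref{eq:objective-variation-bank} holds uniformly for \emph{all} $\theta$, so we may apply it at the random point $\theta = \theta^{e+1}$ without any measurability or independence argument.

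First we restate why~\eqref{eq:objective-variation-bank} holds. For fixed $\theta$, the only dependence of $F^e(\theta)$ on the round index is through the bank $U^e$ inside $\mathcal{L}_{\text{CP},i}^e$; the VLA loss $\mathcal{L}_i(\theta)$ and the normalized features $\hat{z}$ are unchanged. Each per-sample loss $\ell_{\text{CP}}(\hat{z};U,t)$ is $(\sqrt{2}\alpha_{\text{CP}}/\tau)$-Lipschitz in $U$ in Frobenius norm. This follows from the gradient bound $\|\nabla_U\ell_{\text{CP}}\|_F\le \sqrt{2}\alpha_{\text{CP}}/\tau$ computed above, together with the mean value theorem along the segment $U^e\to U^{e+1}$; the loss is smooth in $U$ on all of $\mathbb{R}^{M\times d}$, so no convexity of the bank domain is needed. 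Averaging over samples and then taking the convex combination with weights $w_i$ preserves the constant.

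Next, evaluate at $\theta=\theta^{e+1}$ to obtain almost surely
\[
F^{e+1}(\theta^{e+1}) - F^e(\theta^{e+1}) \le \frac{\sqrt{2}\alpha_{\text{CP}}}{\tau}\|U^{e+1}-U^e\|_F .
\]
Take the conditional expectation given $\mathcal{F}_e$. Monotonicity of conditional expectation applies, and both sides are integrable: $\|U\|_F\le\sqrt{M}$ because every row has norm at most $1$, and $F$ is bounded on the relevant variables (or one argues via the nonnegative right-hand side). Since $\alpha_{\text{CP}}$ and $\tau$ are deterministic constants, they factor out, and~\eqref{eq:bank-variation} gives the bound $\frac{\sqrt{2}\alpha_{\text{CP}}}{\tau}C_U\eta P$. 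Substituting $\alpha_{\text{CP}}=\lambda_{\text{CP}}\eta$ then yields~\eqref{eq:bank-variation-drift}.

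The only step needing care is integrability and well-definedness of the left-hand conditional expectation. If necessary, interpret it as the conditional expectation of a random variable bounded above by an integrable one, namely $\frac{\sqrt{2}\alpha_{\text{CP}}}{\tau}\cdot 2\sqrt{M}$. Beyond that, the argument is a direct chaining of the two displayed bounds.
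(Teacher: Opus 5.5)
Your proposal is correct and is essentially the paper's proof: apply the pointwise bound \eqref{eq:objective-variation-bank} at the random iterate $\theta^{e+1}$, take conditional expectation given $\mathcal{F}_e$, invoke \eqref{eq:bank-variation}, and substitute $\alpha_{\mathrm{CP}}=\lambda_{\mathrm{CP}}\eta$. On integrability, you do not need $F$ itself to be bounded: the $\mathcal{L}_i$ terms cancel exactly, so the difference $F^{e+1}(\theta)-F^e(\theta)$ is bounded in absolute value by $2\sqrt{2M}\,\alpha_{\mathrm{CP}}/\tau$.
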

\begin{proof}
Eq.~\eqref{eq:objective-variation-bank} holds pointwise for any \(\theta\), including the random iterate \(\theta^{e+1}\). Taking conditional expectation and applying Eq.~\eqref{eq:bank-variation} gives Eq.~\eqref{eq:bank-variation-drift}.
\end{proof}
Eq.~\eqref{eq:bank-variation} is a bank path-length condition, not an optimal-bank convergence claim. It is mild for prototype banks because \(U^e\) is an averaged representation statistic rather than an adversarial objective sequence, matching the prototype-sharing view used in federated representation learning~\citep{tan2022fedproto}. If the joint model-bank dynamics approach a stable representation fixed point, then \(\|U^{e+1}-U^e\|_F \to 0\); the rate theorem assumes the quantitative per-round control needed for an \(\mathcal{O}(E^{-1/2})\) stationarity bound. Bounded variation/path-length controls are standard in non-stationary and time-varying optimization~\citep{besbes2015nonstationary, lin2023timevarying}; related surrogate analyses also control changes between successive optimization surrogates~\citep{mairal2013surrogate, razaviyayn2013bsum}.

\begin{assumption}[Conditionally Unbiased Stochastic Gradients with Bounded Noise]
\label{assump:variance}
For every round \(e\), client \(i\), and local step \(j\),
\begin{equation}
\mathbb{E}\!\left[h_i^{e,j}\mid \mathcal{F}_{e,j-1}\right]
=
\nabla F_i^e(\theta_i^{e,j-1}),
\qquad
\mathbb{E}\!\left[
\left\|
h_i^{e,j} - \nabla F_i^e(\theta_i^{e,j-1})
\right\|_2^2
\middle|
\mathcal{F}_{e,j-1}
\right]
\leq
\sigma^2.
\end{equation}
\end{assumption}
This conditional unbiasedness and bounded-noise model is the standard stochastic-gradient oracle used in local-SGD/FedAvg convergence analyses~\citep{stich2019local, khaled2020local, reddi2021adaptive}.

\begin{assumption}[Round-Wise Gradient Dissimilarity]
\label{assump:heterogeneity}
There exists \(\zeta \geq 0\) such that for every round \(e\) and every \(\theta\),
\begin{equation}
\Xi_e(\theta)
\triangleq
\sum_{i=1}^N w_i
\left\|
\nabla F_i^e(\theta) - \nabla F^e(\theta)
\right\|_2^2
\leq
\zeta^2.
\label{eq:round-wise-dissimilarity}
\end{equation}
\end{assumption}
This is the standard bounded gradient-dissimilarity condition used to quantify non-i.i.d. client drift in federated optimization~\citep{li2020fedprox, khaled2020local, karimireddy2020scaffold, reddi2021adaptive}. Here it is imposed on the round-frozen ForgeVLA objectives \(F_i^e\), so \(\zeta^2\) is algorithm-dependent and captures any reduction in cross-client disagreement induced by the shared bank.

\begin{assumption}[Unbiased Partial Participation]
\label{assump:partial-participation}
Conditioned on \(\mathcal{F}_e\), the participation indicators \(\{\xi_i^e\}_{i=1}^N\) introduced above are mutually independent and are also independent of the local mini-batch noise used to generate the coupled full-participation local updates \(\{g_i^e\}_{i=1}^N\). Consequently, the sampled server weights in Eq.~\eqref{eq:sampled-weights-anchor} satisfy
\begin{equation}
\mathbb{E}\!\left[\tilde{w}_i^e \mid \mathcal{F}_e\right] = w_i.
\end{equation}
Define the participation inflation factor
\begin{equation}
\Gamma_{\pi}
\triangleq
\max_{0 \leq e \leq E-1}
\max_{1 \leq i \leq N}
w_i \frac{1-p_i^e}{p_i^e}.
\label{eq:participation-inflation}
\end{equation}
\end{assumption}
Unbiased client sampling with bounded participation inflation is the partial-participation analogue of the client-sampling assumptions used in federated optimization analyses~\citep{karimireddy2020scaffold, reddi2021adaptive}.

\begin{lemma}[Anchored Adaptive-Aggregation Perturbation]
\label{lem:ag-perturbation}
Let \(A^e(\theta)\) denote the first quadratic term in Eq.~\eqref{eq:ag-analysis-update}. Suppose that for some \(G_{\mathrm{AG}} \geq 0\),
\begin{equation}
\mathbb{E}\!\left[
\|\nabla A^e(\tilde{\theta}^{e+1})\|_2^2
\middle|
\mathcal{F}_e
\right]
\leq
G_{\mathrm{AG}}^2\eta^2P^2.
\label{eq:ag-anchor-gradient}
\end{equation}
Then the solution of Eq.~\eqref{eq:ag-analysis-update}, with \(\alpha_{\mathrm{AG}}=\lambda_{\mathrm{AG}}/(\eta P)\), satisfies
\begin{equation}
\mathbb{E}\!\left[
\|\theta^{e+1}-\tilde{\theta}^{e+1}\|_2^2
\middle|
\mathcal{F}_e
\right]
\leq
\frac{G_{\mathrm{AG}}^2}{4\lambda_{\mathrm{AG}}^2}\eta^4P^4.
\label{eq:ag-anchor-distance}
\end{equation}
Consequently, by \(L\)-smoothness, Young's inequality, and the local-SGD moment bound in Eq.~\eqref{eq:local-moment}, Eq.~\eqref{eq:ag-perturb-control} holds for constants \(\rho_{\mathrm{AG}}\) and \(C_{\mathrm{AG}}\) independent of \(E\).
\end{lemma}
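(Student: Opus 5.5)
The plan is to exploit the structure of Eq.~\eqref{eq:ag-analysis-update}: the objective is $Q^e(\theta) = A^e(\theta) + \alpha_{\mathrm{AG}}\|\theta-\tilde{\theta}^{e+1}\|_2^2$, where $A^e$ is a convex quadratic, since it is a nonnegatively weighted sum of squares of affine functions of $\theta$. Hence $Q^e$ is $2\alpha_{\mathrm{AG}}$-strongly convex. The comparison point will be the anchor $\tilde{\theta}^{e+1}$. The first-order optimality condition at $\theta^{e+1}$ reads
\[
\nabla A^e(\theta^{e+1}) + 2\alpha_{\mathrm{AG}}(\theta^{e+1}-\tilde{\theta}^{e+1}) = 0 .
\]
I will take the inner product of this identity with $\theta^{e+1}-\tilde{\theta}^{e+1}$. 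Monotonicity of $\nabla A^e$ gives
\[
\langle \nabla A^e(\theta^{e+1}) - \nabla A^e(\tilde{\theta}^{e+1}),\, \theta^{e+1}-\tilde{\theta}^{e+1}\rangle \geq 0 .
\]
Combining the two yields
\[
2\alpha_{\mathrm{AG}}\|\theta^{e+1}-\tilde{\theta}^{e+1}\|_2^2 \leq -\langle \nabla A^e(\tilde{\theta}^{e+1}), \theta^{e+1}-\tilde{\theta}^{e+1}\rangle \leq \|\nabla A^e(\tilde{\theta}^{e+1})\|_2\,\|\theta^{e+1}-\tilde{\theta}^{e+1}\|_2 ,
\]
so the deterministic bound $\|\theta^{e+1}-\tilde{\theta}^{e+1}\|_2 \leq \|\nabla A^e(\tilde{\theta}^{e+1})\|_2/(2\alpha_{\mathrm{AG}})$ holds pathwise.

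Next I square this bound and substitute $\alpha_{\mathrm{AG}} = \lambda_{\mathrm{AG}}/(\eta P)$. This gives
\[
\|\theta^{e+1}-\tilde{\theta}^{e+1}\|_2^2 \leq \frac{\eta^2P^2}{4\lambda_{\mathrm{AG}}^2}\|\nabla A^e(\tilde{\theta}^{e+1})\|_2^2 .
\]
Taking conditional expectation given $\mathcal{F}_e$ and applying Eq.~\eqref{eq:ag-anchor-gradient} produces exactly Eq.~\eqref{eq:ag-anchor-distance}. The factor $\eta^2P^2$ is $\mathcal{F}_e$-measurable (in fact deterministic), so it can be pulled out of the expectation. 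No integrability issue arises, because the bound is pathwise.

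For the consequence, Eq.~\eqref{eq:ag-perturb-control} (stated later) presumably bounds $\mathbb{E}[F^e(\theta^{e+1}) - F^e(\tilde{\theta}^{e+1})\mid\mathcal{F}_e]$ by $\rho_{\mathrm{AG}}$ times a descent-type quantity plus $C_{\mathrm{AG}}\eta^2P$-order terms. I would apply the $L$-smoothness descent lemma, which is valid for $F^e$ because it is a convex combination of $L$-smooth $F_i^e$:
\[
F^e(\theta^{e+1}) \leq F^e(\tilde{\theta}^{e+1}) + \langle\nabla F^e(\tilde{\theta}^{e+1}),\Delta\rangle + \tfrac{L}{2}\|\Delta\|_2^2, \qquad \Delta \triangleq \theta^{e+1}-\tilde{\theta}^{e+1}.
\]
Young's inequality then controls the cross term by $\tfrac{\rho}{2}\|\nabla F^e(\tilde{\theta}^{e+1})\|_2^2 + \tfrac{1}{2\rho}\|\Delta\|_2^2$. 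To get back to round-start quantities, I would write $\nabla F^e(\tilde{\theta}^{e+1}) = \nabla F^e(\theta^e) + O(L\|\tilde{\theta}^{e+1}-\theta^e\|)$ and bound $\mathbb{E}\|\tilde{\theta}^{e+1}-\theta^e\|^2$ via the local-SGD moment bound Eq.~\eqref{eq:local-moment}, together with the participation variance $\Gamma_\pi$ from Assumption~\ref{assump:partial-participation}.

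The main obstacle is matching the exact form of Eq.~\eqref{eq:ag-perturb-control}. The choice of Young parameter, $\rho$ proportional to $\eta P$, must be tuned so that the $\|\Delta\|^2$ contribution, of order $\eta^4P^4/(\eta P)=\eta^3P^3$, stays within the stochastic floor while $\rho_{\mathrm{AG}}$ is small enough to be absorbed by the descent term. The distance estimate itself is routine.
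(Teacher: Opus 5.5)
Your proposal is correct and follows the paper's proof. In both, first-order optimality plus monotonicity of $\nabla A^e$ gives the pathwise bound $2\alpha_{\mathrm{AG}}\|\theta^{e+1}-\tilde{\theta}^{e+1}\|_2 \leq \|\nabla A^e(\tilde{\theta}^{e+1})\|_2$. The consequence then comes from $L$-smoothness, Young's inequality, and replacing $\nabla F^e(\tilde{\theta}^{e+1})$ by $\nabla F^e(\theta^e)$ plus an $L\|\tilde{\theta}^{e+1}-\theta^e\|_2$ term, controlled via Eq.~\eqref{eq:local-moment}.

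Your guess that the remainder is of order $\eta^2P$ was off, but your own derivation lands on the correct form. Choosing the Young parameter of order $\eta P$, and adding the $\Gamma_{\pi}$ participation variance that the paper leaves implicit, gives exactly $\rho_{\mathrm{AG}}\eta P\|\nabla F^e(\theta^e)\|_2^2 + C_{\mathrm{AG}}\eta^3P^3$.
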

\begin{proof}
The function \(A^e\) is convex quadratic in \(\theta\). With \(\delta^e=\theta^{e+1}-\tilde{\theta}^{e+1}\), first-order optimality gives \(\nabla A^e(\theta^{e+1})+2\alpha_{\mathrm{AG}}\delta^e=0\). Monotonicity of \(\nabla A^e\) implies \(2\alpha_{\mathrm{AG}}\|\delta^e\|_2 \leq \|\nabla A^e(\tilde{\theta}^{e+1})\|_2\), which gives Eq.~\eqref{eq:ag-anchor-distance} after conditioning. Applying smoothness to \(F^e(\theta^{e+1})-F^e(\tilde{\theta}^{e+1})\), replacing \(\nabla F^e(\tilde{\theta}^{e+1})\) by \(\nabla F^e(\theta^e)\) plus an \(L\|\tilde{\theta}^{e+1}-\theta^e\|_2\) term, and using Eq.~\eqref{eq:local-moment} yields Eq.~\eqref{eq:ag-perturb-control}.
\end{proof}

\begin{assumption}[ForgeVLA Objective-Level Perturbation Control]
\label{assump:forgevla-regularity}
Let \(\tilde{\theta}^{e+1}\) denote the sampled FedAvg reference point defined in Eq.~\eqref{eq:sampled-weights-anchor}. There exist deterministic constants \(\rho_{\mathrm{AG}}, C_{\mathrm{AG}}, C_B \geq 0\), with \(\rho_{\mathrm{AG}}\) a sufficiently small universal constant, such that, for every round \(e\),
\begin{equation}
\mathbb{E}\!\left[
F^e(\theta^{e+1}) - F^e(\tilde{\theta}^{e+1})
\middle|
\mathcal{F}_e
\right]
\leq
\rho_{\mathrm{AG}}\eta P\|\nabla F^e(\theta^e)\|_2^2
+
C_{\mathrm{AG}}\eta^3 P^3,
\label{eq:ag-perturb-control}
\end{equation}
\begin{equation}
\mathbb{E}\!\left[
F^{e+1}(\theta^{e+1}) - F^e(\theta^{e+1})
\middle|
\mathcal{F}_e
\right]
\leq
C_B \eta^2 P.
\label{eq:objective-drift-control}
\end{equation}
\end{assumption}
This assumption isolates the two ForgeVLA-specific deviations from vanilla FedAvg at the objective level: adaptive aggregation contributes the perturbation term \(C_{\mathrm{AG}}\eta^3P^3\), and bank evolution contributes the drift term \(C_B\eta^2 P\). Lemma~\ref{lem:ag-perturbation} gives a sufficient anchored-aggregation condition for Eq.~\eqref{eq:ag-perturb-control}. Lemma~\ref{lem:bank-variation} shows that Eq.~\eqref{eq:objective-drift-control} follows from the bank-variation bound in Eq.~\eqref{eq:bank-variation}, with \(C_B=\sqrt{2}\lambda_{\mathrm{CP}}C_U/\tau\). The theorem only requires the objective-level bounds.

\textbf{Interpretation.}
The role of Assumption~\ref{assump:forgevla-regularity} is to capture, in a single unified statement, the two algorithm-specific perturbations that ForgeVLA introduces beyond standard local SGD.
The first term (Eq.~\eqref{eq:ag-perturb-control}) bounds how much the adaptive-aggregation step can worsen the current-round objective relative to the sampled FedAvg baseline $\tilde{\theta}^{e+1}$.
Because the adaptive-aggregation objective (Eq.~\eqref{eq:ag-analysis-update}) is anchored to $\tilde{\theta}^{e+1}$, Lemma~\ref{lem:ag-perturbation} reduces this requirement to a moment bound on the projection-gradient at the anchor.
The second term (Eq.~\eqref{eq:objective-drift-control}) bounds the inter-round objective change due to bank evolution; Lemma~\ref{lem:bank-variation} reduces it to bounded bank path length.
Thus, ForgeVLA preserves the FedAvg/local-SGD convergence order under standard stochastic optimization assumptions plus bounded adaptive-aggregation and bank-variation perturbations.

For the optional comparison to a fixed bank, define
\begin{equation}
F_U(\theta)
\triangleq
\sum_{i=1}^N w_i
\left(
\mathcal{L}_i(\theta)
+
\mathcal{L}_{\mathrm{CP},i}(\theta; U)
\right),
\end{equation}
and, when needed, assume the bank-to-gradient continuity condition
\begin{equation}
\|\nabla F_U(\theta) - \nabla F_{U'}(\theta)\|_2
\leq
L_U \|U - U'\|_F
\qquad
\text{for all } \theta \text{ and all banks } U,U',
\label{eq:bank-gradient-lipschitz}
\end{equation}
for some deterministic constant \(L_U \geq 0\).

\begin{theorem}[FedAvg-Order Stationarity of ForgeVLA]
\label{thm:convergence}
Assume Assumptions~\ref{assump:smooth}, \ref{assump:lower-bound}, \ref{assump:variance}, \ref{assump:heterogeneity}, \ref{assump:partial-participation}, and \ref{assump:forgevla-regularity}, together with \(\mathbb{E}[F^0(\theta^0)] < \infty\), the standard local-SGD small-step condition \(L \eta P \leq 1\), and the participation-stability condition \(L \eta P \Gamma_{\pi} \leq c_0\) for a sufficiently small universal constant \(c_0 > 0\).
Then there exist universal constants \(c_{\sigma}, c_{\mathrm{het}}, c_{\pi} > 0\) such that, with
\begin{equation}
\begin{aligned}
\mathcal{R}_E
\triangleq\;&
\frac{8\left(\mathbb{E}[F^0(\theta^0)] - F_{\mathrm{lb}}\right)}{\eta P E}
\;+\;
c_{\sigma} L \eta \sigma^2
\;+\;
c_{\mathrm{het}} L^2 \eta^2 P (P-1)\zeta^2 \\
&+\;
c_{\pi} L \eta P \Gamma_{\pi}(\sigma^2 + \zeta^2)
\;+\;
8 C_{\mathrm{AG}}\eta^2 P^2
\;+\;
8 C_B\eta,
\end{aligned}
\label{eq:integrated-rate}
\end{equation}
the following main bound holds:
\begin{equation}
\frac{1}{E}\sum_{e=0}^{E-1}\mathbb{E}\|\nabla F^e(\theta^e)\|_2^2
\leq
\mathcal{R}_E,
\label{eq:main-stationarity-bound}
\end{equation}
If, in addition, Eq.~\eqref{eq:bank-gradient-lipschitz} holds, then for any reference bank \(U^\star\),
\begin{equation}
\frac{1}{E}\sum_{e=0}^{E-1}\mathbb{E}\|\nabla F_{U^\star}(\theta^e)\|_2^2
\leq
2\mathcal{R}_E
+
\frac{2L_U^2}{E}\sum_{e=0}^{E-1}\mathbb{E}\|U^e - U^\star\|_F^2.
\label{eq:fixed-bank-transfer}
\end{equation}
Consequently, if the bank sequence stabilizes in the Ces\`aro sense around some \(U^\star\), then this additional continuity condition transfers the round-wise stationarity guarantee directly to the fixed objective \(F_{U^\star}\). Moreover, under the stated bounded-perturbation conditions, fixed \(P\), bounded \(\Gamma_{\pi}\), and \(E\)-independent \(C_{\mathrm{AG}}, C_B\), choosing \(\eta = \Theta(E^{-1/2})\) gives \(\mathcal{R}_E = \mathcal{O}(E^{-1/2})\), matching the standard nonconvex local-SGD/FedAvg order. If Eq.~\eqref{eq:bank-gradient-lipschitz} also holds with \(E\)-independent \(L_U\) and
\(
\frac{1}{E}\sum_{e=0}^{E-1}\mathbb{E}\|U^e - U^\star\|_F^2 = \mathcal{O}(E^{-1/2}),
\)
then the same order transfers to \(F_{U^\star}\). When adaptive aggregation and bank evolution are absent, \textit{i.e.}, \(C_{\mathrm{AG}} = C_B = 0\) and \(F_i^e(\theta) \equiv \mathcal{L}_i(\theta)\), Eq.~\eqref{eq:main-stationarity-bound} reduces to the usual heterogeneity- and participation-aware FedAvg/local-SGD guarantee up to universal constants.
\end{theorem}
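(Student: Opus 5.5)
The plan is a standard one-round descent argument for local SGD on the round-frozen objective $F^e$, followed by telescoping across rounds. Each round's progress is split into three pieces.

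\begin{align*}
\mathbb{E}[F^{e+1}(\theta^{e+1}) - F^e(\theta^e)\mid\mathcal{F}_e]
&= \mathbb{E}[F^e(\bar\theta^{e+1}) - F^e(\theta^e)\mid\mathcal{F}_e]\\
&\quad+ \mathbb{E}[F^e(\tilde\theta^{e+1}) - F^e(\bar\theta^{e+1})\mid\mathcal{F}_e]\\
&\quad+ \mathbb{E}[F^e(\theta^{e+1}) - F^e(\tilde\theta^{e+1})\mid\mathcal{F}_e]\\
&\quad+ \mathbb{E}[F^{e+1}(\theta^{e+1}) - F^e(\theta^{e+1})\mid\mathcal{F}_e].
\end{align*}

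The last two pieces are handled directly by Assumption~\ref{assump:forgevla-regularity}. They contribute $\rho_{\mathrm{AG}}\eta P\|\nabla F^e(\theta^e)\|^2 + C_{\mathrm{AG}}\eta^3P^3 + C_B\eta^2P$. Because $\rho_{\mathrm{AG}}$ is small, its term is absorbed into the descent term; $\rho_{\mathrm{AG}}\le 1/16$ suffices.

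\textbf{Full-participation FedAvg step.} For the first piece I will reuse the classical nonconvex local-SGD analysis (Stich / Khaled et al. / Karimireddy et al.) on $\{F_i^e\}$ with fixed $e$. Conditioned on $\mathcal{F}_e$, Assumption~\ref{assump:variance} makes $\bar g^e = -\eta\sum_j\sum_i w_i h_i^{e,j}$. Applying $L$-smoothness at $\theta^e$ gives
\[
\mathbb{E}F^e(\bar\theta^{e+1})-F^e(\theta^e)\le -\tfrac{\eta P}{2}\|\nabla F^e(\theta^e)\|^2 + \tfrac{\eta L^2}{2}\sum_{j}\sum_i w_i\mathbb{E}\|\theta_i^{e,j-1}-\theta^e\|^2 + L\eta^2 P\sigma^2 + \ldots,
\]
after dropping a negative term $-\frac{\eta}{2P}\|\sum_j\sum_i w_i\nabla F_i^e(\theta_i^{e,j-1})\|^2$, which is kept only to cancel the mean-squared part of the $L\eta^2$ term. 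The client-drift moment is bounded by the standard recursion, using $L\eta P\le 1$ and Assumption~\ref{assump:heterogeneity}:
\[
\sum_i w_i\mathbb{E}\|\theta_i^{e,j}-\theta^e\|^2 \le c\,(j\eta^2\sigma^2 + j(j{-}1)\eta^2(\zeta^2+\|\nabla F^e(\theta^e)\|^2)).
\]
This is the moment bound Eq.~\eqref{eq:local-moment} that Lemma~\ref{lem:ag-perturbation} refers to. Substituting it yields descent $-\frac{\eta P}{4}\|\nabla F^e(\theta^e)\|^2 + c\,L\eta^2P\sigma^2 + c\,L^2\eta^3P^2(P{-}1)\zeta^2$. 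The $\|\nabla F^e\|^2$ part of the drift is absorbed via $L\eta P\le1$ with a suitably small constant.

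\textbf{Partial participation.} For the second piece, $\tilde\theta^{e+1}-\bar\theta^{e+1}=\sum_i(\tilde w_i^e-w_i)g_i^e$. This has zero conditional mean given $\{g_i^e\}$, by Assumption~\ref{assump:partial-participation}. Smoothness therefore leaves only the linear term at $\bar\theta^{e+1}$, which has mean zero, plus $\frac L2\mathbb{E}\|\sum_i(\tilde w_i^e-w_i)g_i^e\|^2$. By independence of the $\xi_i^e$, this equals $\frac L2\sum_i w_i^2\frac{1-p_i^e}{p_i^e}\mathbb{E}\|g_i^e\|^2 \le \frac L2\Gamma_\pi\sum_i w_i\mathbb{E}\|g_i^e\|^2$. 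The quantity $\sum_i w_i\mathbb{E}\|g_i^e\|^2$ is at most $c\,\eta^2P(\sigma^2+P\zeta^2+P\|\nabla F^e(\theta^e)\|^2)$, and I will bound it using $\eta P\sigma^2$ scale terms together with $L\eta P\le1$. The gradient part is then $cL\eta^2P^2\Gamma_\pi\|\nabla F^e\|^2 \le c\,c_0\,\eta P\|\nabla F^e\|^2$, which is absorbed when $c_0$ is small. The remainder yields the $c_\pi L\eta P\Gamma_\pi(\sigma^2+\zeta^2)\cdot\eta P$ term.

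\textbf{Telescoping and the transfer bound.} Summing over $e$ uses the fact that $F^{e+1}(\theta^{e+1})$ appears on both sides of consecutive rounds. Together with $F^E\ge F_{\mathrm{lb}}$, the four pieces telescope to
\[
\tfrac{\eta P}{8}\sum_e\mathbb{E}\|\nabla F^e(\theta^e)\|^2 \le \mathbb{E}F^0(\theta^0)-F_{\mathrm{lb}} + E(\cdots).
\]
Dividing by $\eta PE/8$ gives $\mathcal{R}_E$. The perturbation terms become $8C_{\mathrm{AG}}\eta^2P^2$ and $8C_B\eta$. The fixed-bank statement Eq.~\eqref{eq:fixed-bank-transfer} follows from $\|a\|^2\le2\|b\|^2+2\|a-b\|^2$, applied with $F_{U^e}=F^e$ and Eq.~\eqref{eq:bank-gradient-lipschitz}. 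The $\mathcal{O}(E^{-1/2})$ claims follow by plugging in $\eta=\Theta(E^{-1/2})$: every term except the first is $\mathcal{O}(\eta)$, and the first is $\mathcal{O}(1/(\eta E))$. The reduction to FedAvg is immediate when $C_{\mathrm{AG}}=C_B=0$.

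\textbf{Main obstacle.} I expect the hardest step to be the bookkeeping of the drift and participation variance terms. Their gradient-norm contributions must be absorbed with explicit constants so that the final coefficient is exactly $1/8$. Getting the precise $\eta^2P(P-1)$ versus $\eta P\Gamma_\pi$ scaling requires care, in particular not mixing the $\eta P$ and $\eta$ normalizations. I would first fix $\rho_{\mathrm{AG}}\le1/16$ and $c_0$ small, then tune the drift recursion's constant.
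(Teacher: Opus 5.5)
Your proposal is correct and follows essentially the same route as the paper's proof: coupled full-participation local-SGD descent on the frozen $F^e$; the exact participation variance $\sum_i w_i^2\frac{1-p_i^e}{p_i^e}\|g_i^e\|_2^2\le\Gamma_\pi\sum_i w_i\|g_i^e\|_2^2$ fed by the local moment bound; Assumption~\ref{assump:forgevla-regularity} for the aggregation and bank-drift pieces; absorption of gradient terms down to $-\frac{\eta P}{8}$; telescoping against $F_{\mathrm{lb}}$; and $\|a\|_2^2\le 2\|b\|_2^2+2\|a-b\|_2^2$ for the fixed-bank transfer. One looseness, which you partly anticipate and which the paper shares by citing standard estimates: cancelling the $L\eta^2\|\sum_j\sum_i w_i\nabla F_i^e(\theta_i^{e,j-1})\|_2^2$ term (with your factor-two Young split) and absorbing the drift's gradient part require $L\eta P$ below a small universal constant (e.g.\ $L\eta P\le 1/2$ for the former), not just $L\eta P\le 1$.
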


\begin{proof}
Conditioned on \(\mathcal{F}_e\), couple the sampled round with the hypothetical full-participation local-SGD recursion driven by the same mini-batch draws. The standard nonconvex local-SGD descent estimate under bounded conditional noise and bounded gradient dissimilarity yields~\citep{li2020fedprox, khaled2020local, karimireddy2020scaffold, reddi2021adaptive}
\begin{equation}
\mathbb{E}\!\left[F^e(\bar{\theta}^{e+1}) \mid \mathcal{F}_e\right]
\leq
F^e(\theta^e)
- \frac{\eta P}{4}\|\nabla F^e(\theta^e)\|_2^2
+
c_{\sigma}' L \eta^2 P \sigma^2
+
c_{\mathrm{het}}' L^2 \eta^3 P^2 (P-1)\Xi_e(\theta^e),
\label{eq:frozen-descent}
\end{equation}
for universal constants \(c_{\sigma}', c_{\mathrm{het}}' > 0\).

Write the sampled displacement as
\(
\tilde{g}^e = \sum_{i=1}^N \tilde{w}_i^e g_i^e = \bar{g}^e + \varepsilon_{\pi}^e
\),
where
\(
\bar{g}^e = \sum_{i=1}^N w_i g_i^e
\).
Define
\(
\varepsilon_{\pi}^e
=
\sum_{i=1}^N
w_i\left(\frac{\xi_i^e}{p_i^e} - 1\right) g_i^e.
\)
By Assumption~\ref{assump:partial-participation}, conditioned on \(\mathcal{F}_e\) and \(\{g_i^e\}_{i=1}^N\), the coefficients \(w_i(\xi_i^e/p_i^e - 1)\) are mean-zero and mutually independent, hence the cross terms vanish and
\(
\mathbb{E}[\varepsilon_{\pi}^e \mid \mathcal{F}_e] = 0
\)
and
\begin{equation}
\mathbb{E}\!\left[\|\varepsilon_{\pi}^e\|_2^2 \middle| \mathcal{F}_e, \{g_i^e\}_{i=1}^N\right]
=
\sum_{i=1}^N
w_i^2 \frac{1-p_i^e}{p_i^e}\|g_i^e\|_2^2.
\label{eq:participation-noise-exact}
\end{equation}
Using \(w_i^2 \frac{1-p_i^e}{p_i^e} \leq \Gamma_{\pi} w_i\) for every \(i,e\) and then averaging over the local mini-batch noise yields
\begin{equation}
\mathbb{E}\!\left[\|\varepsilon_{\pi}^e\|_2^2 \mid \mathcal{F}_e\right]
\leq
\Gamma_{\pi}
\sum_{i=1}^N w_i \mathbb{E}\!\left[\|g_i^e\|_2^2 \mid \mathcal{F}_e\right].
\label{eq:participation-noise}
\end{equation}
The same recursion also gives
\begin{equation}
\sum_{i=1}^N w_i \mathbb{E}\!\left[\|g_i^e\|_2^2 \mid \mathcal{F}_e\right]
\leq
c_g \eta^2 P^2
\left(
\|\nabla F^e(\theta^e)\|_2^2 + \sigma^2 + \zeta^2
\right)
\label{eq:local-moment}
\end{equation}
for a universal constant \(c_g > 0\).
Applying \(L\)-smoothness to \(F^e(\tilde{\theta}^{e+1})\) and using Eqs.~\eqref{eq:frozen-descent}--\eqref{eq:local-moment} yields
\begin{equation}
\begin{aligned}
\mathbb{E}\!\left[F^e(\tilde{\theta}^{e+1}) \mid \mathcal{F}_e\right]
\leq\;&
F^e(\theta^e)
- \frac{\eta P}{4}\|\nabla F^e(\theta^e)\|_2^2
+
c_{\sigma}' L \eta^2 P \sigma^2 \\
&+
c_{\mathrm{het}}' L^2 \eta^3 P^2 (P-1)\Xi_e(\theta^e)
+
c_{\pi}' L \eta^2 P^2 \Gamma_{\pi}
\left(
\|\nabla F^e(\theta^e)\|_2^2 + \sigma^2 + \zeta^2
\right),
\end{aligned}
\label{eq:sampled-descent}
\end{equation}
for a universal constant \(c_{\pi}' > 0\).

Assumption~\ref{assump:forgevla-regularity} gives
\begin{equation}
\mathbb{E}\!\left[F^e(\theta^{e+1}) \mid \mathcal{F}_e\right]
\leq
\mathbb{E}\!\left[F^e(\tilde{\theta}^{e+1}) \mid \mathcal{F}_e\right]
+
\rho_{\mathrm{AG}}\eta P\|\nabla F^e(\theta^e)\|_2^2
+
C_{\mathrm{AG}}\eta^3 P^3,
\label{eq:ag-perturbed-descent}
\end{equation}
while Eq.~\eqref{eq:objective-drift-control} gives
\begin{equation}
\mathbb{E}\!\left[F^{e+1}(\theta^{e+1}) \mid \mathcal{F}_e\right]
\leq
\mathbb{E}\!\left[F^e(\theta^{e+1}) \mid \mathcal{F}_e\right]
+
C_B\eta^2 P.
\label{eq:objective-drift-bound}
\end{equation}

Combining Eqs.~\eqref{eq:sampled-descent}--\eqref{eq:objective-drift-bound}, using \(\Xi_e(\theta^e) \leq \zeta^2\), and absorbing the gradient-dependent perturbations under the sufficiently small universal constants \(\rho_{\mathrm{AG}}\) and \(L \eta P \Gamma_{\pi} \leq c_0\) yields
\begin{equation}
\begin{aligned}
\mathbb{E}\!\left[F^{e+1}(\theta^{e+1}) \mid \mathcal{F}_e\right]
\leq\;&
F^e(\theta^e)
- \frac{\eta P}{8}\|\nabla F^e(\theta^e)\|_2^2
+
c_{\sigma}'' L \eta^2 P \sigma^2 \\
&+
c_{\mathrm{het}}'' L^2 \eta^3 P^2 (P-1)\zeta^2
+
c_{\pi}'' L \eta^2 P^2 \Gamma_{\pi}(\sigma^2 + \zeta^2) \\
&+
C_{\mathrm{AG}}\eta^3 P^3
+
C_B\eta^2 P,
\end{aligned}
\end{equation}
for universal constants \(c_{\sigma}'', c_{\mathrm{het}}'', c_{\pi}'' > 0\). Summing over \(e=0,\dots,E-1\), telescoping, using Assumption~\ref{assump:lower-bound}, and dividing by \(\eta P E / 8\) gives Eq.~\eqref{eq:main-stationarity-bound} after absorbing constants into \(c_{\sigma}, c_{\mathrm{het}}, c_{\pi}\) and Eq.~\eqref{eq:integrated-rate}.

For the fixed-objective part, use \(F^e(\theta) = F_{U^e}(\theta)\) and Eq.~\eqref{eq:bank-gradient-lipschitz}:
\[
\|\nabla F_{U^\star}(\theta^e)\|_2
\leq
\|\nabla F^e(\theta^e)\|_2
+
\|\nabla F_{U^\star}(\theta^e) - \nabla F_{U^e}(\theta^e)\|_2
\leq
\|\nabla F^e(\theta^e)\|_2 + L_U\|U^e - U^\star\|_F.
\]
Squaring, averaging over \(e\), and using Eq.~\eqref{eq:main-stationarity-bound} gives Eq.~\eqref{eq:fixed-bank-transfer}. This transfer from round-wise stationarity to a fixed objective is standard for optimization with evolving surrogates or time-varying losses~\citep{mairal2013surrogate, razaviyayn2013bsum, lin2023timevarying}.
\end{proof}

\section{Empirical Convergence}

\begin{figure*}[tb]
    \centering
    \includegraphics[width=0.75\textwidth]{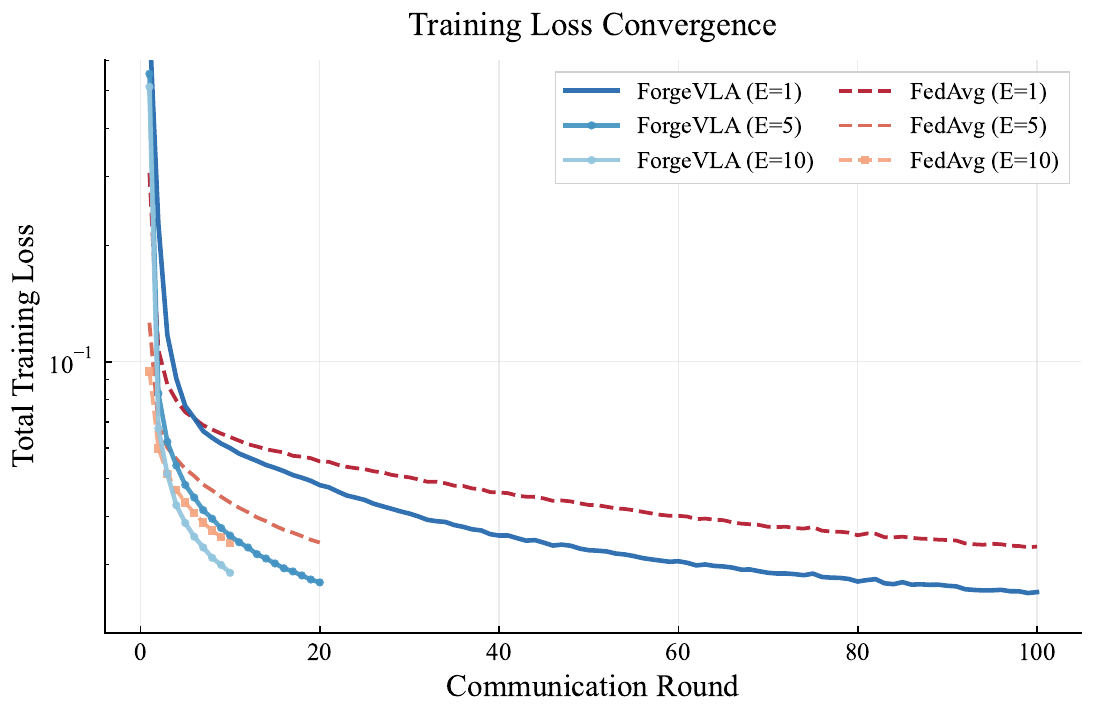}
    \caption{
        The training loss curves of ForgeVLA and FedAvg.
    }
    \label{fig:training-loss}
\vspace{-10pt}
\end{figure*}

Figure~\ref{fig:training-loss} shows the training loss curves of ForgeVLA and FedAvg on the LIBERO-Goal benchmark under different training configurations.
ForgeVLA achieves comparable training loss to FedAvg across all configurations, empirically validating the convergence stability of our method and complementing the theoretical analysis in Section~\ref{sec:convergence}.

\section{Additional Implementation Details}
\label{sec:implementation-details}

We use InternVLA-M1~\citep{chen2025internvla} as the backbone VLA, adapting the VLM encoder with LoRA~\citep{hu2022lora} ($r{=}32$, all linear layers; 128.10M/3882.72M trainable parameters) and fully fine-tuning the action decoder.
The embodied instruction classifier shares the same VLM backbone with an attention-pooling head and is fine-tuned with LoRA.
We simulate $N{=}10$ clients with full participation, $s{=}3$ tasks per client (non-i.i.d. task-level partitioning), $E{=}20$ communication rounds, and $P{=}5$ local epochs per round at batch size 32.
We use AdamW (zero weight decay) with learning rates $10^{-5}$ (encoder) and $10^{-4}$ (decoder), both with linear warmup.
ForgeVLA hyperparameters: $\alpha_{\text{CP}}{=}0.2$, $\tau{=}0.07$, $\alpha_{\text{AG}}{=}0.1$.
We evaluate with 50 rollouts per task across 10 tasks per LIBERO suite and report success rate and $\text{Pass}@50$.
All experiments use CUDA 12.8, Python 3.10.19, and PyTorch 2.6.0; baselines follow their original hyperparameters.

\section{Additional Evaluation Results}
We present additional evaluation results to analyze the training efficiency and robustness of ForgeVLA under different local optimization budgets $P$ in Table~\ref{tab:additional-evaluation}.
When trained with $P=10$ local optimization iterations per communication round (implemented as local epochs), ForgeVLA achieves performance that surpasses FedAvg with $P=5$ on LIBERO-Goal and is comparable to FedAvg with $P=1$ across all benchmarks.
In contrast, vanilla FedAvg suffers from significant performance degradation as the number of local optimization iterations increases, primarily due to exacerbated client drift and vision-language feature collapse.
By leveraging the proposed contrastive planning loss and adaptive aggregation strategy, ForgeVLA effectively alleviates representation collapse and conflicting local updates, maintaining stable and high-performance learning even with intensive local optimization.

Notably, in the $E{=}100, P{=}1$ configuration, ForgeVLA achieves 81.2\% on LIBERO-Goal, which slightly exceeds the centralized upper bound (75.8\%).
We attribute this to the contrastive planning loss, which explicitly encourages inter-task discriminability in the embedding space, a regularization effect absent from the standard centralized training objective.
We emphasize that this result is based on a single seed and that reporting mean and standard deviation over multiple runs would provide a more reliable comparison; we leave such variance analysis for future work.

\section{Privacy Implication}

ForgeVLA follows the standard FL data-retention principle: raw data never leaves client devices, and clients exchange only model updates with the server, matching the vanilla FedAvg workflow.
We consider the standard honest-but-curious server threat model commonly adopted in federated learning~\citep{kairouz2021advances}: the server faithfully executes the protocol but may attempt to infer client information from received updates.
Under this model, ForgeVLA exposes essentially the same information as FedAvg, with two additional transmissions: (i) a one-time server-to-client broadcast of the embodied instruction classifier trained on public data (which contains no client-private information) and (ii) per-task mean representations.
These per-task prototypes are aggregated statistics that have been argued to carry lower reconstruction risk than full gradient updates~\citep{tan2022fedproto}, though we note they are not formally private.
Consequently, existing privacy-enhancing techniques for FL, such as secure aggregation~\citep{bonawitz2017practical} and differential privacy~\citep{abadi2016deep}, can be applied on top of ForgeVLA without modifying the core algorithm to provide stronger formal guarantees if required.

\section{Limitations}

While ForgeVLA demonstrates strong performance across all benchmarks and validates real-world transferability, we acknowledge several limitations.
First, the embodied instruction classifier maps vision--action pairs to a fixed set of $M$ predefined language instructions. This closed-set design works well for structured industrial and service robotics scenarios but cannot handle open-ended task descriptions.
Second, ForgeVLA follows the standard honest-but-curious threat model and lacks formal privacy mechanisms. While transmitted prototypes have lower reconstruction risk than full gradients, they are not formally private and remain vulnerable to stronger gradient inversion attacks.
Thus, extending to open-vocabulary instruction generation and integrating formal privacy guarantees with rigorous privacy-utility trade-off analysis are important directions for future work.

\begin{table}[htbp]
  \def\arraystretch{1.3}
  \addtolength{\tabcolsep}{0pt}
  \centering
  \tablestyle{3pt}{1.0}
  \caption{The performance comparisons between different methods. \textbf{bold} marks the best-performing results.}
  \resizebox{\linewidth}{!}{%
    \begin{tabular}{cccccc}
    \toprule
    Methods & Datasets & \# Params (M) & \# Trainable Params (M) & Success Rate (\%) & $\text{Pass}@50$ (\%) \\
    \midrule
    \multicolumn{6}{c}{$E=100$; $P=1$} \\
    \midrule
    Centralized & LIBERO-Goal & 3882.724 & 128.101 & 75.8  & 100 \\
    FedAvg & LIBERO-Goal & 3882.724 & 128.101 & 57.6  & 90 \\
    ForgeVLA & LIBERO-Goal & 3882.724 & 128.101 & \textbf{81.2}\textcolor{ForestGreen}{\scriptsize $^\uparrow$23.6\%}  & 100 \\
    \midrule
    Centralized & LIBERO-Object & 3882.724 & 128.101 & 98.8  & 100 \\
    FedAvg & LIBERO-Object & 3882.724 & 128.101 & 98.2  & 100 \\
    ForgeVLA & LIBERO-Object & 3882.724 & 128.101 & \textbf{99.4}\textcolor{ForestGreen}{\scriptsize $^\uparrow$1.2\%}  & 100 \\
    \midrule
    Centralized & LIBERO-Spatial & 3882.724 & 128.101 & 85.8  & 100 \\
    FedAvg & LIBERO-Spatial & 3882.724 & 128.101 & 80.6  & 100 \\
    ForgeVLA & LIBERO-Spatial & 3882.724 & 128.101 & \textbf{85.4}\textcolor{ForestGreen}{\scriptsize $^\uparrow$4.8\%}  & 100 \\
    \midrule
    Centralized & LIBERO-10 & 3882.724 & 128.101 & 79    & 100 \\
    FedAvg & LIBERO-10 & 3882.724 & 128.101 & 71.8  & 100 \\
    ForgeVLA & LIBERO-10 & 3882.724 & 128.101 & \textbf{78.8}\textcolor{ForestGreen}{\scriptsize $^\uparrow$7.0\%}  & 100 \\
    \midrule
    \multicolumn{6}{c}{$E=10$; $P=10$} \\
    \midrule
    Centralized & LIBERO-Goal & 3882.724 & 128.101 & 75.8  & 100 \\
    FedAvg & LIBERO-Goal & 3882.724 & 128.101 & 17.6  & 60 \\
    ForgeVLA & LIBERO-Goal & 3882.724 & 128.101 & \textbf{41.6}\textcolor{ForestGreen}{\scriptsize $^\uparrow$24.0\%}  & 100 \\
    \midrule
    Centralized & LIBERO-Object & 3882.724 & 128.101 & 98.8  & 100 \\
    FedAvg & LIBERO-Object & 3882.724 & 128.101 & 94    & 100 \\
    ForgeVLA & LIBERO-Object & 3882.724 & 128.101 & \textbf{95.6}\textcolor{ForestGreen}{\scriptsize $^\uparrow$1.6\%}  & 100 \\
    \midrule
    Centralized & LIBERO-Spatial & 3882.724 & 128.101 & 85.8  & 100 \\
    FedAvg & LIBERO-Spatial & 3882.724 & 128.101 & 50.8  & 90 \\
    ForgeVLA & LIBERO-Spatial & 3882.724 & 128.101 & \textbf{68.2}\textcolor{ForestGreen}{\scriptsize $^\uparrow$17.4\%}  & 90 \\
    \midrule
    Centralized & LIBERO-10 & 3882.724 & 128.101 & 79    & 100 \\
    FedAvg & LIBERO-10 & 3882.724 & 128.101 & 23.4  & 80 \\
    ForgeVLA & LIBERO-10 & 3882.724 & 128.101 & \textbf{36.4}\textcolor{ForestGreen}{\scriptsize $^\uparrow$13.0\%}  & 90 \\
    \bottomrule
    \end{tabular}%
    }
  \label{tab:additional-evaluation}%
\end{table}%

\section{Experiments Compute Resources}

All experiments are conducted on NVIDIA H800 GPUs. Each H800 is equipped with 128 GB CPU memory and a 16-core CPU. For each ForgeVLA experiment, the training budget amounts to two H800 GPUs running for 48 hours in total.

\section{Border Impacts}

Our work enables broader utilization of diverse vision-language data in federated VLA learning, and can facilitate the development of generalizable embodied and robotic VLA models. 
It brings positive impacts to downstream research including multimodal understanding, robotic manipulation, and privacy-preserving distributed learning. 
We do not identify any foreseeable negative societal or ethical impacts arising from this work.

\section{Declaration of LLM Usage}

LLMs were used solely for linguistic polishing and sentence refinement in this manuscript. 
All LLM-generated content has been fully reviewed and revised by the authors, who retain full responsibility for all technical claims and conclusions.

\end{document}